\def\eqref#1{equation~\ref{#1}}
\def\1{\bm{1}}
\def\vk{{\bm{k}}}
\def\vq{{\bm{q}}}
\def\vv{{\bm{v}}}
\def\vx{{\bm{x}}}
\def\mK{{\bm{K}}}
\def\mQ{{\bm{Q}}}
\def\mV{{\bm{V}}}
\def\mW{{\bm{W}}}
\def\mX{{\bm{X}}}
\DeclareMathAlphabet{\mathsfit}{\encodingdefault}{\sfdefault}{m}{sl}
\SetMathAlphabet{\mathsfit}{bold}{\encodingdefault}{\sfdefault}{bx}{n}
\newcommand{\R}{\mathbb{R}}
\def \bq {{\bm q}}
\def \bk {{\bm k}}
\def \bx {{\bm x}}
\def \bv {{\bm v}}
\def \bh {{\bm h}}
\def \bQ {{\mathbf Q}}
\def \bK {{\mathbf K}}
\def \bV {{\mathbf V}}
\def \bH {{\mathbf H}}
\def \bx {{\bm x}}
\def \bv {{\bm v}}
\def \bv {\mathbf{v}}
\def \RR {{\mathbb{R}}}
\newtheorem{theorem}{Theorem}
\newtheorem{lemma}{Lemma}
\newtheorem{proposition}{Proposition}
\theoremstyle{remark}
\newtheorem{remark}{Remark}
\title{Designing Robust Transformers using \\ Robust Kernel Density Estimation}
\author{
  Xing Han \\
  Department of ECE\\
  University of Texas at Austin\\
  \texttt{aaronhan223@utexas.edu} \\
  \And
  Tongzheng Ren \\
  Department of Computer Science\\
  University of Texas at Austin\\
  \texttt{tongzheng@utexas.edu} \\
  \And
  Tan Minh Nguyen \\
  Department of Mathematics \\
  University of California, Los Angeles \\
  \texttt{tanmnguyen89@ucla.edu} \\
  \And
  Khai Nguyen \\
  Department of Statistics and Data Sciences \\
  University of Texas at Austin\\
  \texttt{khainb@utexas.edu} \\
  \And
  Joydeep Ghosh \\
  Department of ECE\\
  University of Texas at Austin\\
  \texttt{jghosh@utexas.edu} \\
  \And
  Nhat Ho \\
  Department of Statistics and Data Sciences \\
  University of Texas at Austin\\
  \texttt{minhnhat@utexas.edu} \\
}
\begin{document}

\maketitle
           

\begin{abstract}
Transformer-based architectures have recently exhibited remarkable successes across different domains beyond just powering large language models. However, existing approaches typically focus on predictive accuracy and computational cost, largely ignoring certain other practical issues such as robustness to contaminated samples. 
In this paper, by re-interpreting the self-attention mechanism as a non-parametric kernel density estimator, 
we adapt classical robust kernel density estimation methods to develop novel classes of transformers that are resistant to adversarial attacks and data contamination. We first propose methods that down-weight outliers in RKHS when computing the self-attention operations. We empirically show that these methods produce improved performance over existing state-of-the-art methods, particularly on image data under adversarial attacks. Then we leverage the median-of-means principle to obtain another efficient approach that results in noticeably enhanced performance and robustness on language modeling and time series classification tasks. Our methods can be combined with existing transformers to augment their robust properties, thus promising to impact a wide variety of applications.




\end{abstract}

\section{Introduction}
Attention mechanisms and transformers~\citep{vaswani2017attention} have drawn lots of attention in the machine learning community~\citep{lin2021survey,tay2020efficient,khan2021transformers}. Now they are among the best deep learning architectures for a variety of applications, including those in natural language processing ~\citep{devlin2018bert,al2019character,dai2019transformer,child2019generating,JMLR:v21:20-074,baevski2018adaptive,brown2020language,dehghani2018universal}, computer vision~\citep{dosovitskiy2021an,liu2021swin,touvron2020deit,ramesh2021zero,radford2021learning,fan2021multiscale,liu2021video}, and reinforcement learning~\citep{chen2021decision,janner2021offline}.
They are also known for their effectiveness in transferring knowledge from various pretraining tasks to different downstream applications with weak supervision or no supervision~\citep{radford2018improving,radford2019language,devlin2018bert,yang2019xlnet,liu2019roberta}. 

While there have been notable advancements, the robustness of the standard attention module remains an unresolved issue in the literature. In this paper, our goal is to reinforce the attention mechanism and construct a comprehensive framework for robust transformer models. To achieve this, we first revisit the interpretation of self-attention in transformers, viewing it through the prism of the Nadaraya-Watson (NW) estimator \citep{nadaraya1964estimating} in a non-parametric regression context. Within the transformer paradigm, the NW estimator is constructed based on the kernel density estimators (KDE) of the keys and queries. However, these KDEs are not immune to the issue of sample contamination \citep{Scott_Robust}. By conceptualizing the KDE as a solution to the kernel regression problem within a Reproducing Kernel Hilbert Space (RKHS), we can utilize a range of state-of-the-art robust KDE techniques, such as those based on robust kernel regression and median-of-mean estimators. This facilitates the creation of substantially more robust self-attention mechanisms. The resulting suite of robust self-attention can be adapted to a variety of transformer architectures and tasks across different data modalities. We carry out exhaustive experiments covering vision, language modeling, and time-series classification. The results demonstrate that our approaches can uphold comparable accuracy on clean data while exhibiting improved performance on contaminated data. Crucially, this is accomplished without introducing any extra parameters.
\textbf{Related Work on Robust Transformers:}
Vision Transformer (ViT) models \citep{dosovitskiy2020image, touvron2021training} have recently demonstrated impressive performance across various vision tasks, positioning themselves as a compelling alternative to CNNs. A number of studies \citep[e.g.,][]{subramanya2022backdoor, paul2022vision, bhojanapalli2021understanding, mahmood2021robustness, mao2022towards, zhou2022understanding} have proposed strategies to bolster the resilience of these models against common adversarial attacks on image data, thereby enhancing their generalizability across diverse datasets. For instance, \cite{mahmood2021robustness} provided empirical evidence of ViT's vulnerability to white-box adversarial attacks, while demonstrating that a straightforward ensemble defense could achieve remarkable robustness without compromising accuracy on clean data. \cite{zhou2022understanding} suggested fully attentional networks to enhance self-attention, achieving state-of-the-art accuracy on corrupted images. Furthermore, \cite{mao2022towards} conducted a robustness analysis on various ViT building blocks, proposing position-aware attention scaling and patch-wise augmentation to enhance the model's robustness and accuracy. However, these investigations are primarily geared toward vision-related tasks, which restricts their applicability across different data modalities. As an example, the position-based attention from \cite{mao2022towards} induces a bi-directional information flow, which is limiting for position-sensitive datasets such as text or sequences. These methods also introduce additional parameters. Beyond these vision-focused studies, robust transformers have also been explored in fields like text analysis and social media. \cite{yang2022tableformer} delved into table understanding and suggested a robust, structurally aware table-text encoding architecture to mitigate the effects of row and column order perturbations. \cite{liu2021crisisbert} proposed a robust end-to-end transformer-based model for crisis detection and recognition. Furthermore, \cite{li2020robutrans} developed a unique attention mechanism to create a robust neural text-to-speech model capable of synthesizing both natural and stable audios. 
We have noted that these methods vary in their methodologies, largely due to differences in application domains, and therefore limiting their generalizability across diverse contexts.


\textbf{Other Theoretical Frameworks for Attention Mechanisms:}
Attention mechanisms in transformers have been recently studied from different perspectives.~\cite{tsai2019transformer} show that attention can be derived from smoothing the inputs with appropriate kernels.~\cite{katharopoulos2020transformers,choromanski2021rethinking,wang2020linformer} further linearize the softmax kernel in attention to attain a family of efficient transformers with both linear computational and memory complexity. These linear attentions are proven in~\cite{cao2021choose} to be equivalent to a Petrov-Galerkin
projection~\citep{reddy2004introduction}, thereby indicating that the softmax normalization in dot-product attention is sufficient but not necessary. Other frameworks for analyzing transformers that use ordinary/partial differential equations include~\cite{lu2019understanding,sander2022sinkformers}.  In addition, the Gaussian mixture model and graph-structured learning have been utilized to study attentions and transformers~\citep{tang2021probabilistic,gabbur2021probabilistic,zhang-feng-2021-modeling-concentrated,wang2018non,kreuzer2021rethinking}. \cite{nguyen2022transformer} has linked the self-attention mechanism with a non-parametric regression perspective, which offers enhanced interpretability of Transformers. Our approach draws upon this viewpoint, but focuses instead on how it can lead to robust solutions.

 \section{Self-Attention Mechanism from a Non-parametric Regression Perspective}
\label{sec:background}
\label{sec:nonparametric_perspective}
Assume we have the key and value vectors $\{\bk_j, \bv_j\}_{j\in [N]}$ that is collected from the data generating process $\bv = f(\bk) + \varepsilon$,
where $\varepsilon$ is some noise vectors with $\mathbb{E}[\varepsilon] = 0$, and $f$ is the function that we want to estimate. We consider a random design setting where the key vectors $\{\bk_j\}_{j\in[N]}$ are i.i.d. samples from the distribution $p(\bk)$, and we use $p(\bv, \bk)$ to denote the joint distribution of $(\bv, \bk)$ defined by the data generating process. Our target is to estimate $f(\bq)$ for any new queries $\bq$. The NW estimator provides a non-parametric approach to estimating the function $f$, the main idea is that 
\begin{align}
    f(\bk) = \mathbb{E}[\bv|\bk] = \int_{\mathbb{R}^{D}} \bv \cdot p(\bv|\bk) d\bv = \int_{\mathbb{R}^{D}} \frac{\bv \cdot p(\bv, \bk)}{p(\bk)} d\bv, \label{eq:conditional_expectation}
\end{align}
where the first equation comes from the fact that $\mathbb{E}[\varepsilon] = 0$, the second equation comes from the definition of conditional expectation, and the last equation comes from the definition of conditional density. To provide an estimation of $f$, we just need to obtain estimations for both the joint density function $p(\bv, \bk)$ and the marginal density function $p(\bk)$. KDE is commonly used for the density estimation problem \citep{rosenblatt1956remarks, parzen1962estimation}, which requires a kernel $k_{\sigma}$ with the bandwidth parameter $\sigma$ satisfies $\int_{\mathbb{R}^D} k_{\sigma}(\bx-\bx^\prime) d\bx = 1, \forall \bx^\prime$, and estimate the density as
\begin{equation}
    \hat{p}_{\sigma}(\bv, \bk) = \frac{1}{N} \sum_{j\in [N]} k_{\sigma}\left([\bv, \bk] - [\bv_j, \bk_j]\right) \quad
    \hat{p}_{\sigma}(\bk) = \frac{1}{N} \sum_{j\in [N]} k_{\sigma}(\bk - \bk_{j}),
    \label{eq:Gaussian_density_estimator_marginal}
\end{equation}
where $[\bv, \bk]$ denotes the concatenation of $\bv$ and $\bk$. Specifically, when $k_{\sigma}$ is the isotropic Gaussian kernel, we have
$
    \hat{p}_{\sigma}(\bv, \bk) = \frac{1}{N} \sum_{j\in [N]} k_{\sigma}(\bv - \bv_{j}) k_{\sigma}(\bk - \bk_{j}).
$
Combine this with Eq. (\ref{eq:conditional_expectation}) and Eq. (\ref{eq:Gaussian_density_estimator_marginal}), we can obtain the NW estimator of the function $f$ as
\begin{equation}
    \widehat{f}_{\sigma}(\bk) = \frac{\sum_{j\in [N]} \bv_{j} k_{\sigma}(\bk - \bk_{j})}{\sum_{j\in [N]} k_{\sigma}(\bk - \bk_{j})}. \label{eq:Gaussian_nonparametric_regression}
\end{equation}
Furthermore, it is not hard to show that if the keys $\{\bk_j\}_{j\in [N]}$ are normalized, the self-attention mechanism $\widehat{f}_{\sigma}(\bq_{i})$ in Eq.~(\ref{eq:Gaussian_nonparametric_regression}) is exactly the standard Softmax attention
$
    \widehat{f}_{\sigma}(\bq_{i}) = \sum_{j\in [N]}{\rm softmax}\left({\bq}^\top{\bk}_j/\sigma^{2}\right){\bv}_j. \label{eqn:attn_reg_final}
$ Such an assumption on the normalized key $\{\bk_j\}_{j\in [N]}$ can be mild, as in practice we always have a normalization step on the key to stabilizing the training of the transformer~\citep{schlag2021linear}.
If we choose $\sigma^{2} = \sqrt{D}$, where $D$ is the dimension of ${\bq}$ and $\bk_{j}$, then $\widehat{f}_{\sigma}(\bq_{i}) = \bh_{i}$. As a result, the self-attention mechanism in fact performs a non-parametric regression with NW-estimator and isotropic Gaussian kernel when the keys are normalized. 

\begin{wrapfigure}[14]{r}{0.48\textwidth}
\vspace{-1.5em}
\begin{tabular}{@{\hspace{-3.5ex}} c @{\hspace{-5ex}} c @{\hspace{-2.5ex}}}
    \begin{tabular}{c}
    \includegraphics[width=.28\textwidth]{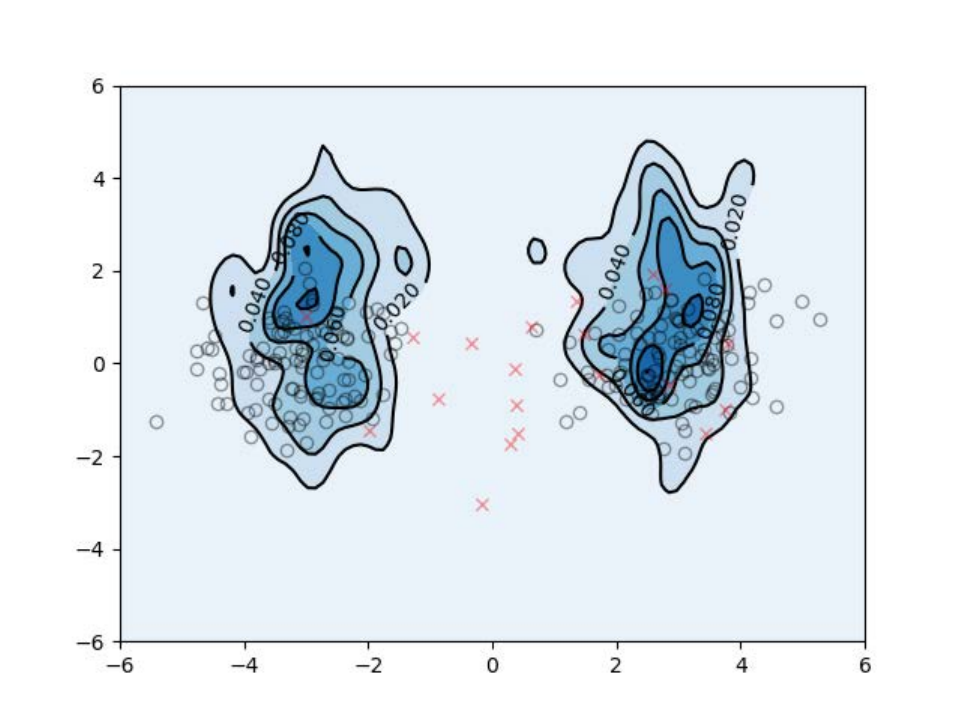}
    \\
    \end{tabular} & 
    \begin{tabular}{c}
    \includegraphics[width=.28\textwidth]{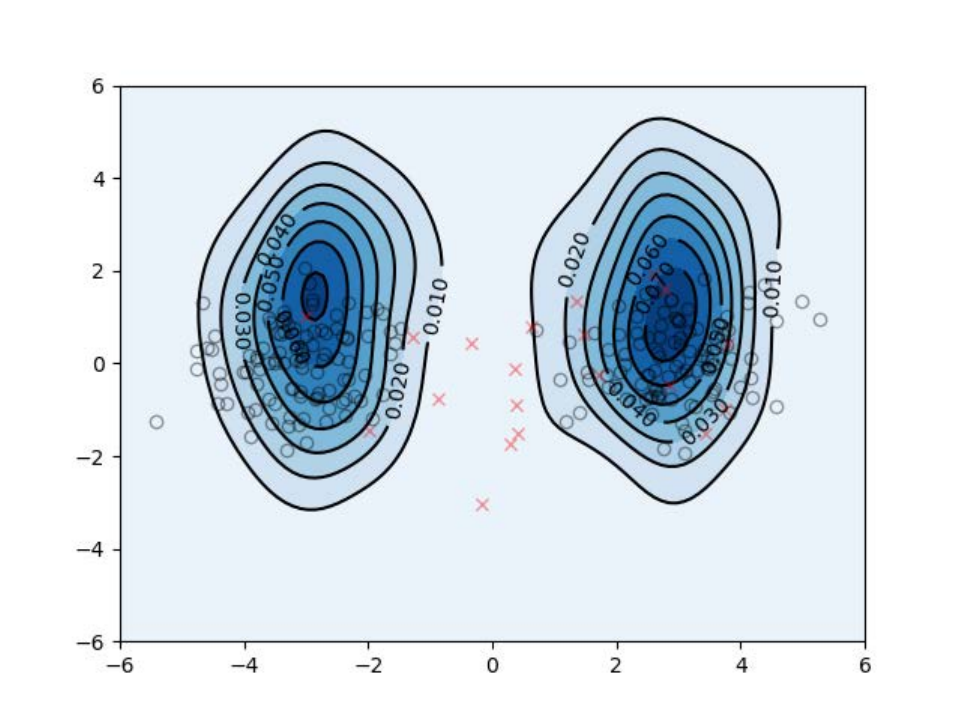} 
    \\
    \end{tabular} \\
    \end{tabular}
    \caption{The contour plots illustrate the density estimation of the two-dimensional query vector embedding within a transformer's attention layer. The left plot employs the regular KDE method, as defined in Eq.~(\ref{eq:rkhs_density_estimator}), whereas the right plot utilizes a robustified version of the KDE method, which enhances KDE's robustness against outliers.}
    \label{fig:density_estimation}
\end{wrapfigure}


\paragraph{KDE as a Regression Problem in RKHS}
We start with the formal definition of the RKHS. The space $\mathcal{H}_k=\{f~|~ f:\mathcal{X}\to \mathbb{R}\}$ is the RKHS associated with kernel $k$, where $k:\mathcal{X} \times \mathcal{X} \to \mathbb{R}$, if it is a Hilbert space with inner product $\langle\cdot, \cdot \rangle_{\mathcal{H}_k}$ and following properties: 
\begin{itemize}
    \item $k(\bx, \cdot) \in \mathcal{H}_k, \forall \bx\in \mathcal{X}$; 
    \item $\forall f\in\mathcal{H}_k$, $f(\bx) = \langle f, k(\bx, \cdot)\rangle_{\mathcal{H}_k}$. Aka the reproducing property.
\end{itemize}
With slightly abuse of notation, we define $k_{\sigma}(\bx, \bx^\prime) = k_{\sigma}(\bx - \bx^\prime)$. By the definition of the RKHS and the KDE estimator, we know $\hat{p}_{\sigma} = \frac{1}{N} \sum_{j\in[N]} k_{\sigma}(\bx_j, \cdot) \in \mathcal{H}_{k_{\sigma}}$, and can be viewed as the optimal solution of the following least-square regression problem in RKHS:

\begin{equation}
    \hat{p}_{\sigma} = \mathop{\arg\min}_{p\in \mathcal{H}_{k_{\sigma}}} \sum_{j\in [N]} \frac{1}{N}\left\|k_{\sigma}(\bx_j, \cdot) - p\right\|_{\mathcal{H}_{k_\sigma}}^2. \label{eq:rkhs_density_estimator}
\end{equation}


Note that, in Eq.~(\ref{eq:rkhs_density_estimator}), the same weight factor $1/N$ is applied uniformly to each error term $\left\|k_{\sigma}(\bx_j, \cdot) - p\right\|_{\mathcal{H}_{k_\sigma}}^2$. This approach functions effectively if there are no outliers in the set $\{k_{\sigma}(\bx_j, \cdot)\}_{j\in [N]}$. However, when outliers are present (for instance, when there is some $j$ such that $\|k_{\sigma}(\bx_j, \cdot)\|_{\mathcal{H}_{k_{\sigma}}}\gg \|k_{\sigma}(\bx_i, \cdot)\|_{\mathcal{H}_{k_{\sigma}}}$, $\forall i\in [N], i\neq j$), the error attributable to these outliers will overwhelmingly influence the total error, leading to a significant deterioration in the overall density estimation.
We illustrate the robustness issue of the KDE in Figure~\ref{fig:density_estimation}. The view that KDE is susceptible to outliers, coupled with the non-parametric understanding of the self-attention mechanism, implies a potential lack of robustness in Transformers when handling outlier-rich data. We now offer a fresh perspective on this robustness issue, introducing a universal framework that is applicable across diverse data modalities.


\section{Robust Transformers that Employ Robust Kernel Density Estimators}
\label{sec:robust_transformer}
Drawing on the non-parametric regression formulation of self-attention, we derive multiple robust variants of the NW-estimator and demonstrate their applicability in fortifying existing Transformers. We propose two distinct types of robust self-attention mechanisms and delve into the properties of each, potentially paving the way for Transformer variants that are substantially more robust.

\begin{figure}
    \centering
    \includegraphics[width=\textwidth]{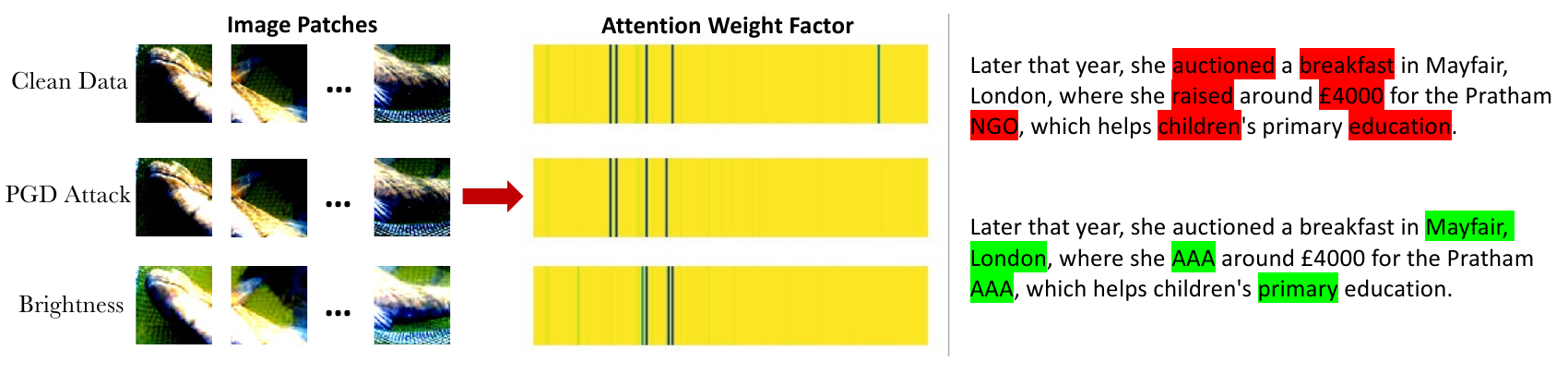}
    \caption{The application of Transformers with robust KDE attention on image and text is shown. (Left) The robust KDE self-attention generates varying weight factors for image patch embeddings under adversarial attacks or data corruption. The adversely impacted regions that would otherwise lead to incorrect predictions are down-weighted, ensuring enhanced accuracy and robustness. (Right) In the field of language modeling, the weight factors lend significance to essential keywords (highlighted in red). In the face of word swap attacks, the fortified self-attention mechanism, particularly when utilizing the medians-of-means principle, is proficient in disregarding or reducing the importance of less consequential words (marked in green). Consequently, this results in a more resilient procedure during self-attention computations.}
    \label{fig:demo_fig}
\end{figure}

\subsection{Down-weighting Outliers in RKHS} 
Inspired by robust regression \citep{fox2002robust}, a direct approach to achieving robust KDE involves down-weighting outliers in the RKHS. More specifically, we substitute the least-square loss in Eq.~(\ref{eq:rkhs_density_estimator}) with a robust loss function $\rho$, resulting in the following formulation:
\begin{align}
    \hat{p}_{\text{robust}} = \mathop{\arg\min}_{p \in \mathcal{H}_{k_{\sigma}}} \sum_{j\in [N]} \rho\left(\|k_{\sigma}(\bx_j, \cdot ) - p\|_{\mathcal{H}_{k_{\sigma}}}\right) = \sum_{j\in [N]} \omega_{j} k_{\sigma}(\bx_j, \cdot) .\label{eq:robust_rkhs_density_estimator}
\end{align}
Examples of the robust loss function $\rho$ include the Huber loss \citep{huber1992robust}, Hampel loss \citep{hampel1986robust}, Welsch loss \citep{welsch1975robust} and Tukey loss \citep{fox2002robust}. We empirically evaluate different loss functions in our experiments. The critical step here is to estimate the set of weights $\omega = (\omega_1, \cdots, \omega_N) \in \Delta_N$, with each $\omega_j \propto \psi\left(\|k_{\sigma}(\bx_j, \cdot)-\hat{p}_{\text{robust}}\|_{\mathcal{H}_{k_{\sigma}}}\right)$, where $\psi(x) := \frac{\rho^\prime(x)}{x}$. Since $\hat{p}_{\text{robust}}$ is defined via $\omega$, and $\omega$ also depends on $\hat{p}_{\text{robust}}$, one can address this circular definition problem via an alternative updating algorithm proposed by \cite{Scott_Robust}. The algorithm starts with randomly initialized $\omega^{(0)} \in \Delta_n$, and performs alternative updates between $\hat{p}_{\text{robust}}$ and $\omega$ until the optimal $\hat{p}_{\text{robust}}$ is reached at the fixed point (see details in Appendix A).
\begin{algorithm}[t]
\small
\caption{Procedure of Computing Attention Vector of Transformer-RKDE/SPKDE/MoM}
\label{alg:rkde} 
\begin{algorithmic}[1]
\STATE \textbf{Input}: $\bQ = \{\bq_i\}_{i\in[N]}, ~\bK = \{\bk_j\}_{j\in[N]}, ~\bV = \{\bv_l\}_{l\in[N]}$, initial weights $~\omega^{(0)}$
\STATE Normalize $\bK = \{\bk_j\}_{j\in[N]}$ along the head dimension.
\STATE Compute kernel function between each pair of sequence: $k_{\sigma}(\bQ, \bK) = \{k_{\sigma}(\bq_i - \bk_j)\}_{i,j\in[N]}$.
\STATE (Optional) apply attention mask on $k_{\sigma}(\bQ, \bK)$.
\STATE \textbf{[MoM]} Randomly sample $B$ subsets $I_1, \dots, I_B$ of size $\mathcal{S}$, obtain the median block $I_l$ such that $\frac{1}{\mathcal{S}} \sum_{j \in I_{l}} k_{\sigma}(\bq_i - \bk_{j}) = \text{median}\{\frac{1}{\mathcal{S}} \sum_{j \in I_{1}} k_{\sigma}(\bq_i - \bk_{j}), \dots, \frac{1}{\mathcal{S}} \sum_{j \in I_{B}} k_{\sigma}(\bq_i - \bk_{j})\}$
\STATE \textbf{[RKDE]} Update weights $\omega^{(0)}$ for marginal/joint density by $\omega_{j}^{(1)} = \frac{\psi\left(\left\|k_{\sigma}(\bk_j, \cdot) - \hat{p}_{\text{robust}}^{(k)}(\bk)\right\|_{\mathcal{H}_{k_{\sigma}}}\right)}{\sum_{j\in [N]}\psi\left(\left\|k_{\sigma}(\bk_j, \cdot) - \hat{p}_{\text{robust}}^{(k)}(\bk)\right\|_{\mathcal{H}_{k_{\sigma}}}\right)}.$
\STATE \textbf{[SPKDE]} Obtain optimal weights $\omega$ for marginal/joint density via solving equation~(\ref{eq:qp}).
\STATE \textbf{[RKDE, SPKDE]} Obtain robust self-attention vector \quad \quad $\widehat{\bh}_{i} = \frac{\sum_{j\in [N]} \bv_{j} \omega_{j}^{\text{joint}} k_{\sigma}(\bq_{i} - \bk_{j})}{\sum_{j\in [N]} \omega_{j}^{\text{marginal}} k_{\sigma}(\bq_{i} - \bk_{j})}$.
\STATE \textbf{[MoM]} Obtain attention vector
$\widehat{\bh}_{i} = \frac{\sum_{j \in I_{l}} v_{j} k_{\sigma}(\bq_i - \bk_{j})} {\sum_{j \in I_{l}} k_{\sigma}(\bq_i - \bk_{j})}.$
\end{algorithmic}
\end{algorithm}

However, while this technique effectively diminishes the influence of outliers, it also comes with noticeable drawbacks. Firstly, it necessitates the appropriate selection of the robust loss function, which may entail additional effort to understand the patterns of outliers. Secondly, the iterative updates might not successfully converge to the optimal solution. A better alternative is to assign higher weights to high-density regions and reduce the weights for atypical samples. The original KDE is scaled and projected to its nearest weighted KDE according to the $L_2$ norm. Similar concepts have been studied by Scaled and Projected KDE (SPKDE) \citep{vandermeulen2014robust}, which offer an improved set of weights that better defend against outliers in the RKHS space.
Specifically, given the scaling factor $\beta > 1$, and let $\mathcal{C}_{\sigma}^N$ be the convex hull of $k_{\sigma}(\vx_1, \cdot), \dots, k_{\sigma}(\vx_N, \cdot) \in \mathcal{H}_{k_{\sigma}}$, i.e., the space of weighted KDEs, the optimal density $\hat{p}_{\text{robust}}$ is given by
\begin{equation}
    \hat{p}_{\text{robust}} = \arg\underset{p\in \mathcal{C}_{\sigma}^N}{\min} \left\|\frac{\beta}{N}\sum_{j\in [N]} k_{\sigma}(x_j, \cdot) - p\right\|_{\mathcal{H}_{k_{\sigma}}}^2,
    \label{eq:spkde}
\end{equation}
which is guaranteed to have a unique minimizer since we are projecting in a Hilbert space and $\mathcal{C}_{\sigma}^N$ is closed and convex. Notice that, by definition, $\hat{p}_{\text{robust}}$ can also be represented as $\hat{p}_{\text{robust}} = \sum_{j\in[N]}\omega_{j}k_{\sigma}(x_j, \cdot), ~\omega \in \Delta^N$, which is same as the formulation in Eq. (\ref{eq:robust_rkhs_density_estimator}). Then Eq.~(\ref{eq:spkde}) can be written as a quadratic programming (QP) problem over $\omega$:
\begin{equation}
    \underset{\omega}{\min} ~\omega^{\top}G\omega - 2q^{\top}\omega, \quad \text{subject~to~} \omega \in \Delta^N,
    \label{eq:qp}
\end{equation}
where $G$ is the Gram matrix of $\{\vx_j\}_{j\in [N]}$ with $k_{\sigma}$ and $q=G\mathbf{1}\frac{\beta}{N}$. Since $k_{\sigma}$ is a positive-definite kernel and each $\vx_i$ is unique, the Gram matrix $G$ is also positive-definite. As a result, this QP problem is convex, and we can leverage commonly used solvers to efficiently obtain the solution and the optimal density $\hat{p}_{\text{robust}}$. 

\paragraph{Robust Self-Attention Mechanism}
We now introduce the robust self-attention mechanism that down-weights atypical samples. We consider the density estimator of the joint distribution and the marginal distribution when using isotropic Gaussian kernel:
\begin{equation}
    \hat{p}_{\text{robust}}(\bv, \bk) = \sum_{j\in [N]} \omega_{j}^{\text{joint}} k_{\sigma}([\bv_j, \bk_j], [\bv, \bk]), \quad \hat{p}_{\text{robust}}(\bk) = \sum_{j\in [N]} \omega_j^{\text{marginal}} k_{\sigma}(\bk_j, \bk).
\end{equation}
Following the non-parametric regression formulation of self-attention in Eq. (\ref{eq:Gaussian_nonparametric_regression}), we obtain the robust self-attention mechanism as
\begin{align}
    \widehat{\bh}_{i} = \frac{\sum_{j\in [N]} \bv_{j} \omega_{j}^{\text{joint}} k_{\sigma}(\bq_{i} - \bk_{j})}{\sum_{j\in [N]} \omega_{j}^{\text{marginal}} k_{\sigma}(\bq_{i} - \bk_{j})},
    \label{eq:robust_rkhs}
\end{align}
where $\omega^{\text{joint}}$ and $\omega^{\text{marginal}}$ are obtained via either alternative updates or the QP solver. We term Transformers whose density from the non-parametric regression formulation of self-attention employs Eq. (\ref{eq:robust_rkhs_density_estimator}) and Eq. (\ref{eq:spkde}) as Transformer-RKDE and Transformer-SPKDE, respectively. Figure \ref{fig:demo_fig} presents an example of the application of the attention weight factor during the learning process from image and text data. The derived weight factor can potentially emphasize elements relevant to the class while reducing the influence of detrimental ones. Note that, the computation of $\{\omega_{j}^{\text{marginal}}\}_{j\in[N]}$ and $\{\omega_{j}^{\text{joint}}\}_{j\in[N]}$ are separate as $\omega_j^{\text{joint}}$ involves both keys and values vectors. During the empirical evaluation, we concatenate the keys and values along the head dimension to obtain the weights for the joint density $\hat{p}_{\text{robust}}(\bv, \bk)$ and only use the key vectors for obtaining the set of weights for the marginal $\hat{p}_{\text{robust}}(\bk)$. In addition, $\omega^{\text{marginal}}, \omega^{\text{joint}} \in \mathbb{R}^{j \times i}$ for $i, j = 1, \dots, N$ are 2-dimensional matrices that include the pairwise weights between each position of the sequence and the rest of the positions. The weights are initialized uniformly across a certain sequence length dimension. For experiments related to language modeling, we can leverage information from the attention mask to initialize the weights on the unmasked part of the sequence. 


\paragraph{Limitations}
While constructing attention weight factors proves effective, they necessitate iterative algorithms to calculate the set of weights when computing self-attention at each layer, resulting in increased overall complexity. Moreover, the foundational contamination model for these methods is the classical Huber contamination model \citep{huber2011robust}, which requires assumptions about contamination distributions and parameters that may not be universally applicable, especially in discrete settings. To address these limitations, we propose a novel approach that circumvents these computational constraints while effectively fostering robust self-attention mechanisms.

\subsection{Robust Self-Attention via Median-of-Means Principle}
The Median-of-Means (MoM) principle \citep{jerrum1986random, alon1996space} is one other way to construct robust estimators. Rather than taking the average of all the observations, the sample is split into several blocks over which the median is computed. The MoM principle also improved the robustness of KDE and demonstrated its statistical performance under a less restrictive outlier framework \citep{humbert2022robust}. More importantly, it can be easily adapted to self-attention. Specifically, we randomly divide the keys $\{\bk_j\}_{j = 1}^{N}$ into $B$ subsets $I_{1}, \ldots, I_{B}$ of equal size, namely, $|I_{1}| = |I_{2}| = \ldots = |I_{B}| = \mathcal{S}$. 
Then, the robust estimator of $p(\bk)$ takes the following form:
\begin{align}
    \hat{p}_{\text{robust}}(\bk) \propto \text{median}\left\{\hat{p}_{\sigma, I_{1}}(\bk), \ldots, \hat{p}_{\sigma, I_{B}}(\bk) \right\}, \label{eq:kde_median}
\end{align}
where we define $\hat{p}_{\sigma, I_{l}}(\bk) = \frac{1}{\mathcal{S}} \sum_{j \in I_{l}} k_{\sigma}(\bk - \bk_{j})$ for any $l\in [B]$. Similarly, the robust estimator of $p(\bv, \bk)$ is as follows:
\begin{align}
    \hat{p}_{\text{robust}}(\bv, \bk) \propto \text{median}\left\{\hat{p}_{\sigma, I_{1}}(\bv, \bk), \ldots, \hat{p}_{\sigma, I_{B}}(\bv, \bk) \right\}, \label{eq:joint_kde_median}
\end{align}
where $\hat{p}_{\sigma, I_{l}}(\bv, \bk) = \frac{1}{\mathcal{S}} \sum_{j \in I_{l}} k_{\sigma}(\bv - \bv_{j}) k_{\sigma}(\bk - \bk_{j})$ for any $l\in [B]$. 
We now propose the self-attention mechanism utilizing the median-of-means principle.

\paragraph{Median-of-Means Self-Attention Mechanism} Given the robust estimators in Eq.~(\ref{eq:kde_median}) and~(\ref{eq:joint_kde_median}), we can consider the following robust estimation of the attention:
\begin{align}
    \widehat{\bh}_{i} = \frac{\frac{1}{\mathcal{S}} \sum_{j \in I_{l}} v_{j} k_{\sigma}(\bq_i - \bk_{j})} {\text{median}\left\{\hat{p}_{\sigma, I_{1}}(\bq_i - \bk), \ldots, \hat{p}_{\sigma, I_{B}}(\bq_i - \bk) \right\}}, \label{eq:robust_Gaussian_nonparametric_regression}
\end{align}
where $I_{l}$ is the block such that $\hat{p}_{\sigma}(\bq_i - \bk)$ achieves its median value in equation~(\ref{eq:joint_kde_median}). 
In this context, the random subsets apply to input sequences rather than individual data points, distinguishing this approach from stochastic batches. It's worth noting that our proposed attention mechanism assumes that key and query vectors achieve their median on the same block. Consequently, we apply the median block $I_{l}$ obtained from the denominator into the numerator, rather than considering the median over all potential blocks, which leads to a process that is faster than computing median blocks on both sides. Moreover, the original MoM principle mandates that each subset be non-overlapping, i.e. $I_{l_{1}} \cap I_{l_{2}} = \emptyset$ for any $1 \leq l_{1} \neq l_{2} \leq B$. However, for structured, high-dimensional data, dividing into non-overlapping blocks may result in the model only gaining a partial perspective of the dataset, leading to sub-optimal performance. Therefore, we construct each subset by sampling with replacement from the original dataset, maintaining the sequential relationship thereafter. In particular, we have found this strategy to be effective in discrete contexts, such as identifying and filtering out aberrant words in a sentence. As illustrated in the bottom right segment of Figure \ref{fig:demo_fig}, under a word swap attack, the MoM robust attention retains the subsequence that is most relevant to the content, while discarding unhelpful parts. The downside of MoM self-attention is also apparent: since the attention mechanism only accesses a portion of the sequence, it is likely to result in suboptimal performance on \textit{clean datasets}. The simplicity of MoM allows for easy integration with many existing models. We initially demonstrate that the MoM self-attention mechanism can enhance the recent state-of-the-art FourierFormer \citep{nguyen2022transformer}. The theoretical explanation for the ability to remove outliers using MoM-Fourier attention can be found in Appendix C.


\begin{table*}[t]
    \centering
    \renewcommand\arraystretch{0.9}
     \caption{ \footnotesize{Perplexity (PPL) and negative likelihood loss (NLL) of our methods (lower part) and baselines (upper part) on WikiText-103. The best results are highlighted in bold font and the second best are highlighted in underline. On clean data, Transformer-SPKDE achieves better PPL and NLL than other baselines. Under random swap with outlier words., Transformers with MoM self-attention show much better performance.}}
     \vskip 0.1in
    \scalebox{0.84}{
    \begin{tabular}{c|c|c|c|c}
    \hlinewd{1.5pt}
    \multirow{2}{*}{ Method (small version) } & \multicolumn{2}{c}{Clean Data} & \multicolumn{2}{c}{Word Swap} \\ \hhline{~|-|-|-|-}
    & Valid PPL/Loss & Test PPL/Loss & Valid PPL/Loss & Test PPL/Loss \\ \hline
    Transformer \citep{NIPS2017_3f5ee243} & 33.15/3.51 & 34.29/3.54 & 72.28/4.45 & 74.56/4.53 \\
    Performer \citep{choromanski2021rethinking} & 32.35/3.48 & 33.49/3.51 & 71.64/4.42 & 73.48/4.49\\
    Transformer-MGK \citep{nguyen_mixture_keys} & 32.28/3.47 & 33.21/3.51 & 69.78/4.38 & 71.03/4.41 \\ 
    FourierFormer \citep{nguyen2022transformer} & 31.86/3.44 & 32.85/3.49 & 65.76/4.32 & 68.33/4.36 \\ \hline
    Transformer-RKDE (Huber) & \underline{31.22/3.42} & \underline{32.29/3.47} & 52.14/3.92 & 55.68/3.99 \\
    Transformer-RKDE (Hampel) & 31.24/3.42 & 32.35/3.48 & 55.61/3.98 & 57.92/4.03 \\
    Transformer-SPKDE & \textbf{31.05/3.41} & \textbf{32.18/3.46} & 51.36/3.89 & 54.97/3.96 \\
    Transformer-MoM & 33.56/3.52 & 34.68/3.55 & \underline{48.29/3.82} & \underline{52.14/3.92} \\
    FourierFormer-MoM & 32.26/3.47 & 33.14/3.50 & \textbf{47.66/3.81} & \textbf{50.96/3.85} \\
    \hlinewd{1.5pt}
    \end{tabular}
    }
    \vskip 0.1in
    \label{tab:wikitext-103}
\end{table*}

\subsection{Incorporating Robust Self-Attention Mechanisms into Transformers}

\paragraph{Computational Efficiency}
The two types of robust attention mechanisms  proposed above have their own distinct advantages. To expedite the computation for Transformer-RKDE and obtain the attention weight factor in a more efficient manner, we employ a single-step iteration on the alternative updates to approximate the optimal set of weights. Empirical results indicate that this one-step iteration can produce sufficiently precise results. For Transformer-SPKDE, as the optimal set of weights is acquired via the QP solver, it demands more computation time but yields superior performance on both clean and contaminated data. As an alternative to weight-based methods, Transformer-MoM offers significantly greater efficiency while providing competitive performance, particularly with text data. The complete procedure for computing the attention vector for Transformer-RKDE, Transformer-SPKDE, and Transformer-MoM is detailed in Algorithm \ref{alg:rkde}.

\paragraph{Training and Inference}
We incorporate our robust attention mechanisms into both training and inference stages of Transformers. Given the uncertainty about data cleanliness, there's a possibility of encountering contaminated samples at either stage. Therefore, it is worthwhile to defend against outliers throughout the entire process. In the training context, our methods modify the computation of attention vectors across each Transformer layer, making them less susceptible to contamination from outliers. This entire process remains nonparametric, with no introduction of additional model parameters. However, the resulting attention vectors, whether shaped by re-weighting or the median-of-means principle, diverge from those generated by the standard softmax attention. This distinction influences the model parameters learned during training. During inference, the test data undergoes a similar procedure to yield robust attention vectors. This ensures protection against potential outlier-induced disruptions within the test sequence. However, if we assume the availability of a clean training set — where contamination arises solely from distribution shifts, adversarial attacks, or data poisoning during inference — it is sufficient to restrict the application of the robust attention mechanism to just the inference phase. This could considerably reduce the computational time required for robust attention vector calculation during training. In our empirical evaluation, we engaged the robust attention mechanism throughout both phases and recorded the associated computational time during training. We found that on standard datasets like ImageNet-1K, WikiText-103, and the UEA time-series classification, infusing the robust attention led to a drop in training loss. This suggests that training data itself may contain inherent noise or outliers.

\section{Experimental Results}
\begin{table*}[t]
\centering
\renewcommand\arraystretch{0.9}
\caption{\footnotesize{Top-1 and top-5 accuracy (\%) on ImageNet. The best results are highlighted in bold font and the second best are highlighted in underlines. RVT \citep{mao2022towards} and DeiT \citep{touvron2021training} achieve better results on clean data; meanwhile, Transformers incorporating robust self-attention hold stronger defense under different adversarial attacks while still achieving competitive performance on the original ImageNet.}}
\vskip 0.1in
\scalebox{0.88}{
\begin{tabular}{c|c|c|c|c|c|c|c|c}
\hlinewd{1.5pt}
\multirow{2}{*}{ Method } & \multicolumn{2}{c}{Clean Data} & \multicolumn{2}{c}{FGSM} & \multicolumn{2}{c}{PGD} & \multicolumn{2}{c}{SPSA} \\ \hhline{~|-|-|-|-|-|-|-|-}
& Top 1 & Top 5 & Top 1 & Top 5 & Top 1 & Top 5 & Top 1 & Top 5 \\ \hline
ViT \citep{dosovitskiy2020image} & 72.23 & 91.13 & 52.61 & 82.26 & 41.84 & 76.49 & 48.34 & 79.36 \\ 
DeiT \citep{touvron2021training} & \underline{74.32} & \underline{93.72} & 53.24 & 84.07 & 41.72 & 76.43 & 49.56 & 80.14 \\
RVT \citep{mao2022towards} & \textbf{74.37} & \textbf{93.89} & 53.67 & 84.11 & 43.39 & 77.26 & 51.43 & 80.98 \\
FourierFormer \citep{nguyen2022transformer} & 73.25 & 91.66 & 53.08 & 83.95 & 41.34 & 76.19 & 48.79 & 79.57 \\ \hline
ViT-RKDE (Huber) & 72.83 & 91.44 & 55.83 & 85.89 & 44.15 & 79.06 & 52.42 & 82.03 \\
ViT-RKDE (Hampel) & 72.94 & 91.63 & \underline{55.92} & \underline{85.97} & \underline{44.23} & \underline{79.16} & \underline{52.48} & \underline{82.07} \\
ViT-SPKDE & 73.22 & 91.95 & \textbf{56.03} & \textbf{86.12} & \textbf{44.51} & \textbf{79.47} & \textbf{52.64} & \textbf{82.33} \\ 
ViT-MoM & 71.94 & 91.08 & 55.76 & 85.23 & 43.78 & 78.85 & 49.38 & 80.02 \\
FourierFormer-MoM & 72.58 & 91.34 & 53.25 & 84.12 & 41.38 & 76.41 & 48.82 & 79.68 \\
\hlinewd{1.5pt}
\end{tabular}
}
\label{tab:imagenet}
\end{table*}

\label{sec:experiments}
In this section, we provide empirical validation of the benefits of integrating our proposed robust KDE attention mechanisms (Transformer-RKDE/SPKDE/MoM) into Transformer base models. We compare these with the standard softmax Transformer across multiple datasets representing different modalities. These include language modeling on the WikiText-103 dataset \citep{merity2016pointer} (Section 4.1) and image classification on ImageNet \citep{russakovsky2015imagenet, deng2009imagenet}. Furthermore, we assess performance across multiple robustness benchmarks, namely ImageNet-C \citep{hendrycks2019benchmarking}, ImageNet-A \citep{hendrycks2021natural}, ImageNet-O \citep{hendrycks2021natural}, ImageNet-R \citep{hendrycks2021many}, and ImageNet-Sketch \citep{wang2019learning} (Section 4.2), as well as UEA time-series classification (Section 4.3). Our proposed robust transformers are compared with state-of-the-art models, including Performer \citep{choromanski2021rethinking}, MGK \citep{nguyen_admixture_heads}, RVT \citep{mao2022towards}, and FourierFormer \citep{nguyen2022transformer}. All experiments were conducted on machines with 4 NVIDIA A-100 GPUs. For each experiment, Transformer-RKDE/SPKDE/MoM were compared with other baselines under identical hyperparameter configurations. 

\subsection{Robust Language Modeling}
WikiText-103 is a language modeling dataset that contains collection of tokens extracted from good and featured articles from Wikipedia, which is suitable for models that can leverage long-term dependencies. 
We follow the standard configurations in \cite{merity2016pointer, schlag2021linear} and splits the training data into $L$-word independent long segments. During evaluation, we process the text sequence using a sliding window of size $L$ and feed into the model with a batch size of $1$. The last position of the sliding window is used for computing perplexity except in the first segment, where all positions are evaluated as in \cite{al2019character, schlag2021linear}.


In our experiments, we utilized the small and medium (shown in Appendix E) language models developed by \cite{schlag2021linear}. We configured the dimensions of key, value, and query to $128$, and set the training and evaluation context length to $256$. We contrasted our methods with Performer \citep{choromanski2021rethinking}, Transformer-MGK \citep{nguyen_admixture_heads} and FourierFormer \citep{nguyen2022transformer}, which have demonstrated competitive performance. For self-attention, we allocated $8$ heads for our methods and Performer, and $4$ for Transformer-MGK. The dimension of the feed-forward layer was set to $2048$, with the number of layers established at $16$. To prevent numerical instability, we used the \texttt{log-sum-exp} trick in equation~(\ref{eq:Gaussian_nonparametric_regression}) when calculating the attention probability vector through the Gaussian kernel. We employed similar tactics when computing the attention weight factor of Transformer-RKDE, initially obtaining the weights in \texttt{log} space, followed by the \texttt{log-sum-exp} trick to compute robust self-attention as outlined in equation~(\ref{eq:robust_rkhs}). For Transformer-MoM, the sampled subset sequences constituted $80\%$ of the length of the original sequence.

Table \ref{tab:wikitext-103} presents the validation and test perplexity (PPL) for several methods. Both Transformer-RKDE and SPKDE models exhibit an improvement over baseline PPL and NLL on both validation and test sets. However, MoM-based models display slightly higher perplexity, a result of using only a portion of the sequence. When the dataset is subjected to a word swap attack, which randomly substitutes selected keywords with a generic token ``$AAA$'' during evaluation, our method, particularly MoM-based robust attention, yields significantly better results. This is particularly evident when filtering out infrequent words, where the median trick has proven its effectiveness. We also noticed superior robustness in RKDE/SPKDE-based robust attention compared to other baseline methods that were not protected from the attack. Our implementation of the word swap is based on the publicly available TextAttack code by \cite{morris2020textattack}\footnote[1]{Implementation available at github.com/QData/TextAttack}. We employed a greedy search method with constraints on stop-word modifications provided by the TextAttack library.

\begin{table*}[t]
\centering
\renewcommand\arraystretch{0.9}
\caption{\footnotesize{We evaluated the performance of proposed models across multiple robustness benchmarks, using appropriate evaluation metrics for each. In the majority of cases, our methods outperformed the baselines.}}
\scalebox{0.88}{
\begin{tabular}{c|c|c|c|c|c}
\hlinewd{1.5pt}
Dataset & ImageNet-C & ImageNet-A & ImageNet-O & ImageNet-R & ImageNet-Sketch \\ \hline
Metric & mCE$\downarrow$ & Top-1 Acc$\uparrow$ & AUPR$\uparrow$ & Top-1 Err Rate$\downarrow$ & Top-1 Acc$\uparrow$ \\ \hline
ViT & 71.14 & 0.18 & 18.31 & 96.89 & 37.13 \\
DeiT & 70.26 & 0.73 & 19.56 & 94.23 & 41.68 \\
RVT & 68.57 & \textbf{9.45} & 22.14 & 63.12 & \underline{50.07} \\
FourierFormer & 71.07 & 0.69 & 18.47 & 94.15 & 38.36 \\ \hline
ViT-RKDE (Huber) & 68.69 & 6.98 & 25.61 & 64.33 & 45.63 \\
ViT-RKDE (Hampel) & \underline{68.55} & 6.34 & 26.14 & \underline{62.16} & 45.76 \\
ViT-SPKDE & \textbf{68.34} & \underline{8.29} & \underline{28.42} & \textbf{61.39} & \textbf{50.13} \\
ViT-MoM & 69.11 & 2.29 & 28.08 & 76.44 & 36.92 \\
FourierFormer-MoM & 70.62 & 2.42 & \textbf{28.86} & 74.15 & 39.44 \\
\hlinewd{1.5pt}
\end{tabular}}
\label{tab:imagenet_corrupt}
\end{table*}

\subsection{Image Classification under Adversarial Attacks and Data Corruptions}
For initial simplicity, we utilize the original ViT (tiny) as our base model. However, our methods are versatile and can be integrated with more advanced base models to enhance their robustness. As our approaches do not alter the model architecture, each model employs $5.7M$ parameters. We have also implemented leading-edge methods, including DeiT with hard distillation \citep{touvron2021training}, FourierFormer \citep{nguyen2022transformer}, and Robust Vision Transformer (RVT) \citep{mao2022towards}, as our baselines. It's important to note that, for a fair comparison with RVT, we only incorporated its position-aware attention scaling without further architectural modifications. Consequently, the resulting RVT model comprises approximately $7.2M$ parameters. To evaluate adversarial robustness, we utilized adversarial examples generated by untargeted white-box attacks, which included the single-step attack method FGSM \citep{goodfellow2014explaining}, multi-step attack method PGD \citep{madry2017towards}, and score-based black-box attack method SPSA \citep{uesato2018adversarial}. These attacks were applied to the entire validation set of ImageNet. Each attack distorts the input image with a perturbation budget $\epsilon = 1 / 255$ under $l_{\infty}$ norm, while the PGD attack uses $20$ steps with a step size of $\alpha=0.15$. In addition, we assessed our methods on multiple robustness benchmarks, which include images derived from the original ImageNet through algorithmic corruptions or outliers.


\begin{figure*}[t!]
    \begin{minipage}{\textwidth}
    \centering
    \begin{tabular}{@{\hspace{-2.4ex}} c @{\hspace{-1.5ex}} @{\hspace{-2.4ex}} c @{\hspace{-1.5ex}} @{\hspace{-2.4ex}} c @{\hspace{-1.5ex}}}
        \begin{tabular}{c}
        \includegraphics[width=.35\textwidth]{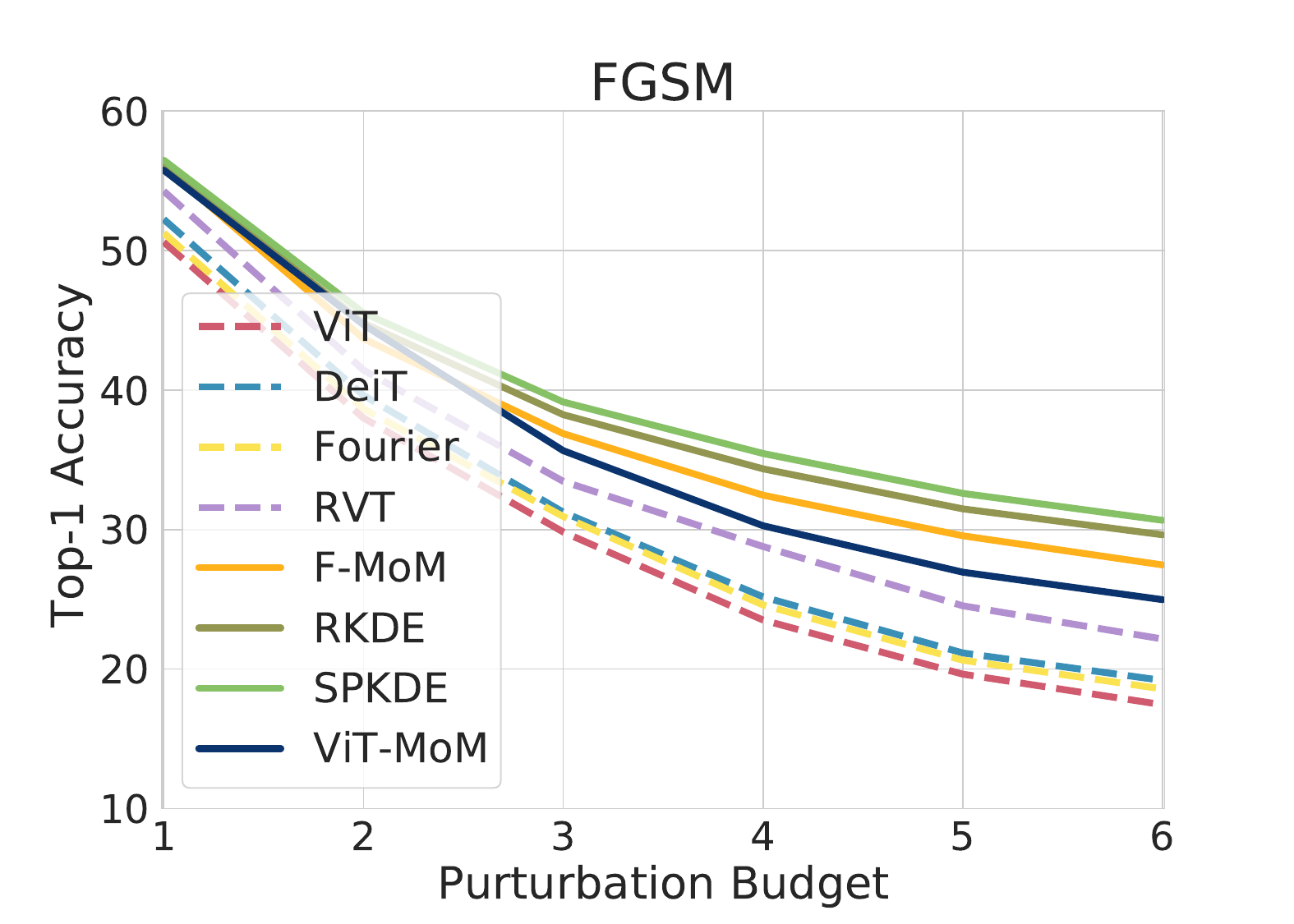}
        \\
        {\small{(a)}}
        \end{tabular} &
        \begin{tabular}{c}
        \includegraphics[width=.35\textwidth]{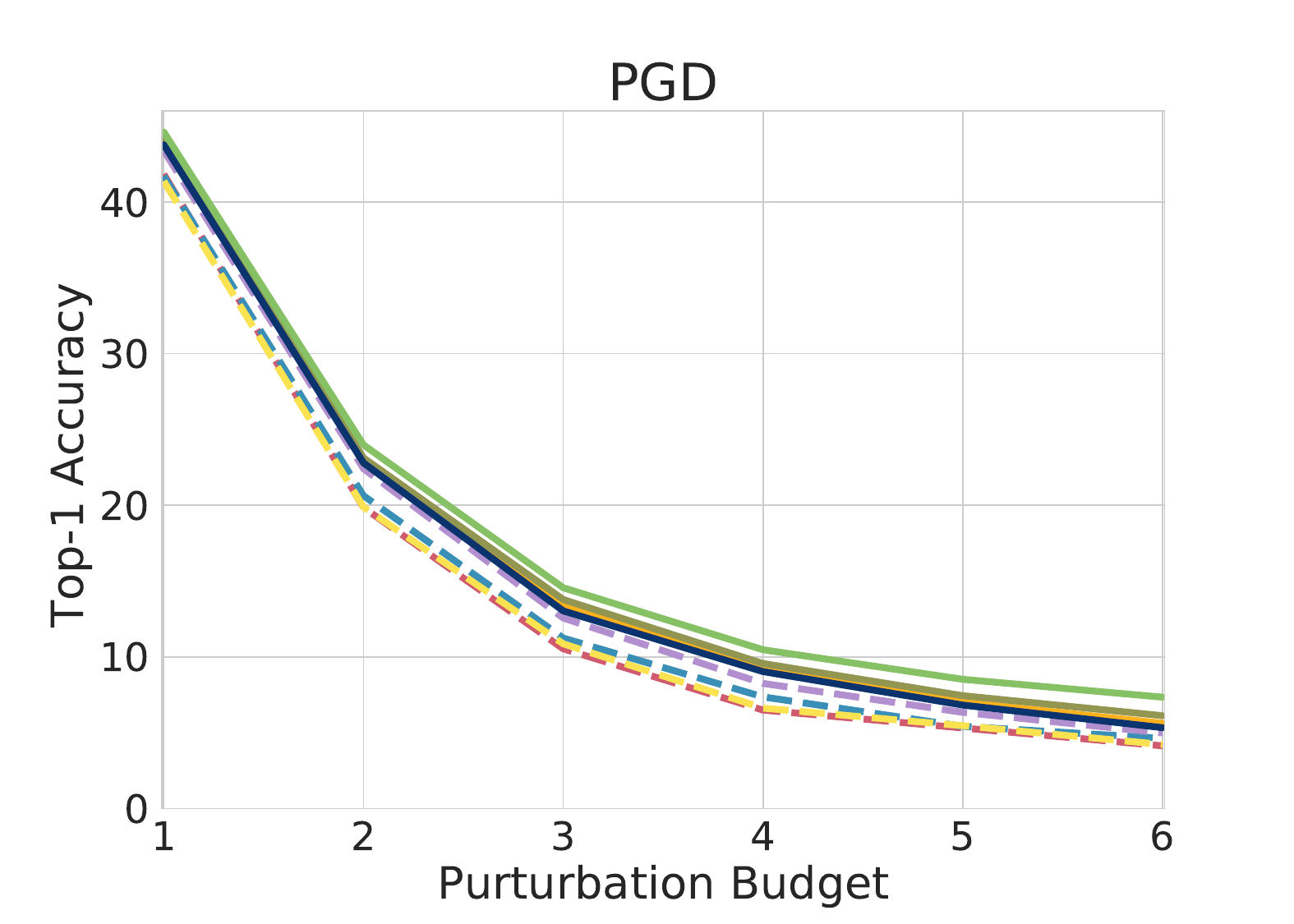}
        \\
        {\small{(b)}}
        \end{tabular} & 
        \begin{tabular}{c}
        \includegraphics[width=.35\textwidth]{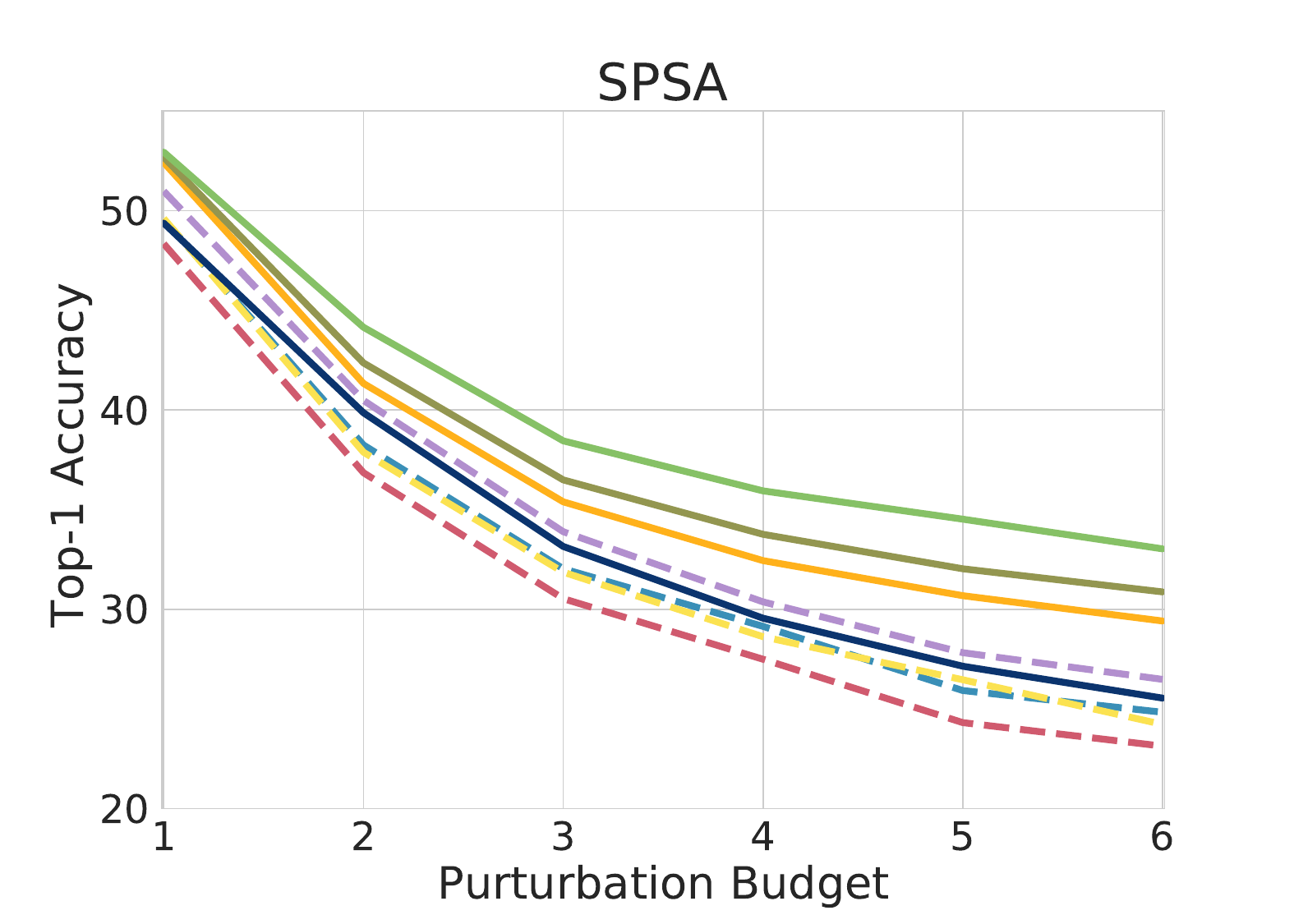} 
        \\
        {\small{(c)}}
        \end{tabular} \\
        \end{tabular}
    \end{minipage}
    \caption{\footnotesize{The top-1 classification \textit{accuracy v.s. perturbation budget $\times$ 255} curves on ImageNet against three untargeted attack methods under the $l_{\infty}$ norm. The proposed set of ViT with robust self-attention mechanisms shows stronger defense under all attack methods with different perturbation budgets.}}
    \label{fig:ablation}
\end{figure*}

\begin{wrapfigure}[20]{r}{0.59\textwidth}
\vspace{-2.5em}
\begin{center}
    \includegraphics[width=.5\textwidth]{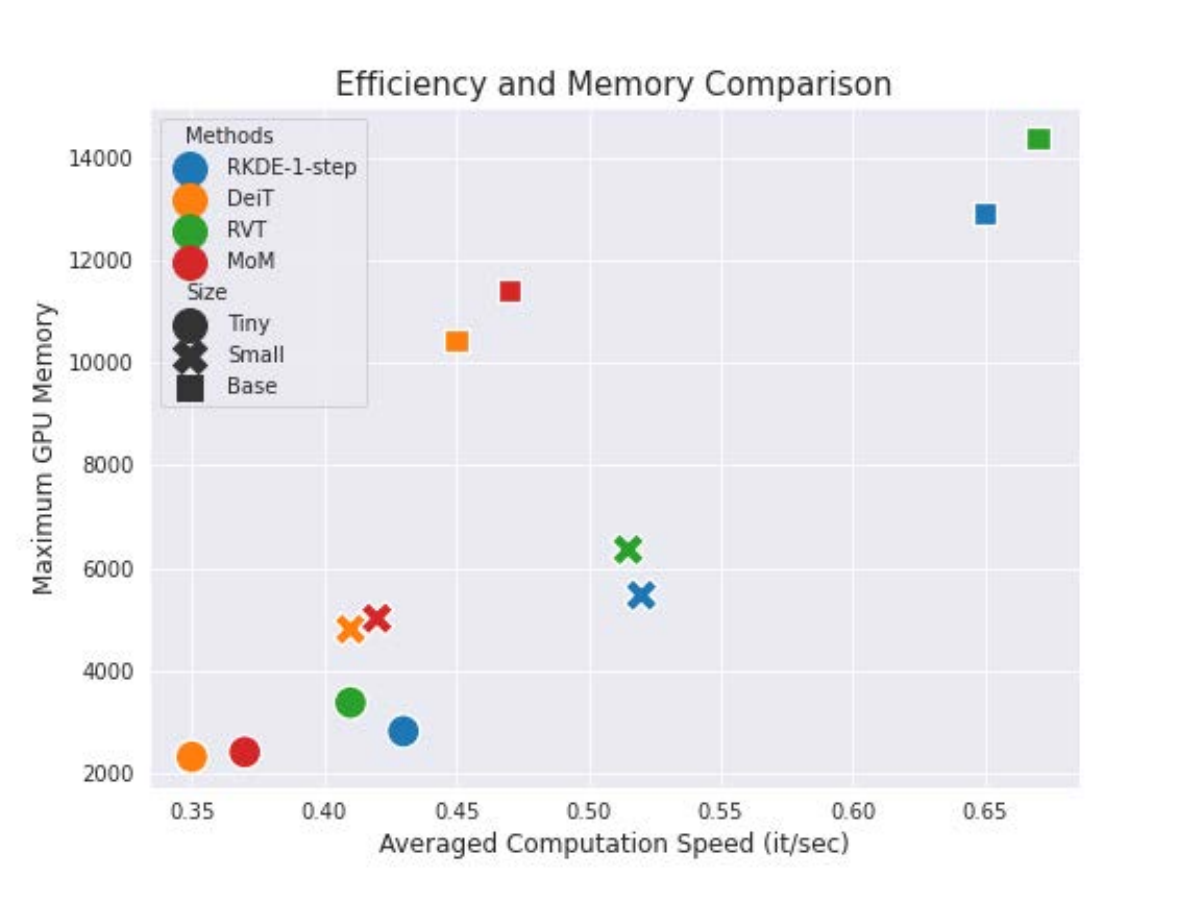}
\end{center}
\caption{\footnotesize{Comparison of averaged computation speed (measured by iteration per second) and maximum GPU memory (measured by the CUDA $\texttt{max\_memory\_allocated}$ function) on ImageNet classification task. The results are measured under the Transformer base models with different capacities: Tiny (5.7M parameters), Small (22M), and Base (86M).}}
\label{fig:memory}
\end{wrapfigure}

Table \ref{tab:imagenet} presents the results under adversarial attack. On clean ImageNet, our performance aligns closely with RVT and DeiT, the leading performers. Notably, our methods outperform RVT under several adversarial attack types, particularly when employing the ViT-SPKDE method.
Figure \ref{fig:ablation} illustrates the relationship between accuracy and perturbation budget across three attack methods. We observe that transformers equipped with robust self-attention mechanisms offer significantly enhanced defense capabilities across different perturbation budgets, with their advantages amplifying as the level of perturbation increases, as expected.
We provide more ablation studies in Appendix E that explore different design choices for each proposed robust KDE attention.
Table \ref{tab:imagenet_corrupt} displays the results across multiple robustness benchmarks, employing appropriate evaluation metrics for each. In most instances, the best performance is achieved by our proposed methods, which clearly improve upon existing baselines.
Additionally, Figure \ref{fig:memory} demonstrates the scalability of our methods as model size increases. We have excluded the SPKDE-based method from this analysis due to its heightened computational demands. The results indicate that both the computation speed and memory increase as model capacity expands for all methods. The MoM-based robust attention closely mirrors the vanilla Transformer, while the RKDE-based robust attention, with one-step approximation, also demonstrates scalability with larger models.

\begin{table*}[h]
\centering
\renewcommand\arraystretch{0.9}
\caption{\footnotesize{A comparison of classification accuracy using the UEA Time Series Classification Archive. Transformers  that incorporate our proposed robust attention mechanisms outperform the existing baselines.}}
\scalebox{0.88}{
\begin{tabular}{c|c|c|c|c|c}
\hlinewd{1.5pt}
Model/Dataset & Ethanol & Heart & PEMS-SF & Spoken & UWave \\ \hline
Transformer & 33.70 & 75.77 & 82.66 & \underline{99.33} & 84.45 \\
FourierFormer & 36.12 & \textbf{76.42} & 86.70 & 99.00 & \underline{86.66} \\ \hline
Transformer-RKDE (Huber) & 34.72 & 75.84 & 84.28 & 99.28 & 86.49 \\
Transformer-SPKDE & 36.09 & \underline{76.29} & 86.02 & \textbf{99.36} & \textbf{88.14} \\
Transformer-MoM & \underline{38.41} & 73.24 & \underline{86.75} & 97.64 & 82.97 \\
FourierFormer-MoM & \textbf{39.89} & 74.11 & \textbf{87.63} & 98.12 & 85.43 \\
\hlinewd{1.5pt}
\end{tabular}}
\label{tab:time_series}
\end{table*}

\subsection{UEA Time Series Classification}
Lastly, we conducted experiments using five datasets from the UEA Time-Series Classification Archive \citep{bagnall2018uea}, and compared the outcomes across various methodologies (Table \ref{tab:time_series}). The baseline implementation and datasets were adapted from \cite{wu2022flowformer}. Our findings indicate that our proposed approaches can notably enhance classification accuracy.

\section{Conclusion and Future Work}
\label{sec:conclusion}
In this work, we explored the link between the dot-product self-attention mechanism and non-parametric kernel regression. This led to the development of a family of fortified transformers, which leverage robust KDE as an alternative to dot-product attention, mitigating the impacts of contaminated samples. We proposed two variants of robust self-attention mechanisms designed to either down-weight or filter out potential corrupted data, both of which can be seamlessly integrated into commonly used transformer models. As our ongoing effort, we are exploring more efficient techniques for estimating the weight set for robust KDE in extremely large models, and incorporating regularization strategies to mitigate the instability of kernel regression with outliers.

\section{Acknowledgment}
Xing Han and Joydeep Ghosh acknowledge support from Intuit Inc. Nhat Ho acknowledges support from the NSF IFML 2019844 and the NSF AI Institute for Foundations of Machine Learning.
\bibliography{reference}
\bibliographystyle{abbrv}
\newpage
\appendix
\onecolumn
\begin{center}
\textbf{\Large{Supplementary Material of ``Designing Robust Transformers using Robust Kernel Density Estimation''}}
\end{center}

\section{The Non-parametric Regression Perspective of Self-Attention}
Given an input sequence ${\mX}=[\vx_1,\ldots,\vx_N]^\top\in \RR^{N\times D_x}$ of $N$ feature vectors, the self-attention mechanism transforms it into another sequence $\bH:=[\bh_1,\cdots,\bh_N]^\top \in \RR^{N\times D_v}$ as follows:
\begin{equation}\label{eq:attention-single-vector-app}
\bh_i=\sum_{j\in [N]}{\rm softmax}\Big(\frac{\vq_i^\top\vk_j}{\sqrt{D}} \Big)\vv_j,\ \mbox{for}\ i=1,\ldots,N.
\end{equation}
The vectors $\vq_i$, $\vk_j$ and $\vv_j$ are 
the query, key and value vectors, respectively. They are 
computed as follows:
\begin{equation}\label{eq:query-key-value-app}
\begin{aligned}
[\vq_1,\vq_2,\ldots,\vq_N]^\top &:= {\mQ} = {\mX}{\mW}_Q^\top \in \RR^{N\times D}, \\
[\vk_1,\vk_2,\ldots,\vk_N]^\top &:={\mK} = {\mX}{\mW}_K^\top \in \RR^{N \times D}, \\
[\vv_1,\vv_2,\ldots,\vv_N]^\top &:= {\mV} = {\mX}{\mW}_V^\top \in \RR^{N\times D_v}, 
\end{aligned}
\end{equation}
where ${\mW}_Q, {\mW}_K\in \RR^{D\times D_x}$, ${\mW}_V\in \RR^{D_v\times D_x}$ are the weight matrices. 
Equation~(\ref{eq:attention-single-vector-app}) can be written in the following equivalent matrix form:
\begin{equation}\label{eq:attention-vector-form-app}
\bH={\rm softmax}\Big(\frac{{\mQ}{\mK}^\top}{\sqrt{D}} \Big){\mV},
\end{equation}
where the softmax function is applied to each row of the matrix ${
{(\mQ\mK^\top)}/{\sqrt{D}}}$. 
Equation (\ref{eq:attention-vector-form-app}) is also called the 
``softmax attention''. Assume we have the key and value vectors $\{\bk_j, \bv_j\}_{j\in [N]}$ that is collected from the data generating process
        \begin{equation}
            \bv = f(\bk) + \varepsilon, \label{eq:nonparametric_regression_softmax_former_app}
        \end{equation}
        where $\varepsilon$ is some noise vectors with $\mathbb{E}[\varepsilon] = 0$, and $f$ is the function that we want to estimate.
If $\{\bk_j\}_{j\in[N]}$ are i.i.d. samples from the distribution $p(\bk)$, and $p(\bv, \bk)$ is the joint distribution of $(\bv, \bk)$ defined by equation~(\ref{eq:nonparametric_regression_softmax_former_app}), we have
    \begin{align}
    f(\bk) = \mathbb{E}[\bv|\bk] = \int_{\mathbb{R}^{D}} \bv \cdot p(\bv|\bk) d\bv = \int_{\mathbb{R}^{D}} \frac{\bv \cdot p(\bv, \bk)}{p(\bk)} d\bv, \label{eq:conditional_expectation_app}
    \end{align}

We need to obtain estimations for both the joint density function $p(\bv, \bk)$ and the marginal density function $p(\bk)$ to obtain function $f$, one popular approach is the kernel density estimation:
    \begin{align}
    \hat{p}_{\sigma}(\bv, \bk) &= \frac{1}{N} \sum_{j\in [N]} k_{\sigma}\left([\bv, \bk] - [\bv_j, \bk_j]\right) \\
    \hat{p}_{\sigma}(\bk) &= \frac{1}{N} \sum_{j\in [N]} k_{\sigma}(\bk - \bk_{j}),
    \label{eq:Gaussian_density_estimator_marginal_app}
\end{align}
where $[\bv, \bk]$ denotes the concatenation of $\bv$ and $\bk$.
$k_{\sigma}$ could be isotropic Gaussian kernel: $k_{\sigma}(\bx - \bx^\prime) = \exp\left(-\|\bx - \bx^\prime\|^2/(2\sigma^2)\right)$, we have
\begin{align}
    \hat{p}_{\sigma}(\bv, \bk) = \frac{1}{N} \sum_{j\in [N]} k_{\sigma}(\bv - \bv_{j}) k_{\sigma}(\bk - \bk_{j}). \label{eq:Gaussian_density_estimator_joint_app}
\end{align}

Combining equations~(\ref{eq:Gaussian_density_estimator_marginal_app}), (\ref{eq:Gaussian_density_estimator_joint_app}), and (\ref{eq:conditional_expectation_app}), we obtain the NW estimator of the function $f$ as
        \begin{align}
    \widehat{f}_{\sigma}(\bk) &= \int_{\mathbb{R}^{D}} \frac{\bv \cdot \hat{p}_{\sigma}(\bv, \bk)}{\hat{p}_{\sigma}(\bk)} d\bv \\
    &= \int_{\mathbb{R}^{D}} \frac{\bv \cdot \sum_{j\in [N]} k_{\sigma}(\bv - \bv_{j}) k_{\sigma}(\bk - \bk_{j})}{\sum_{j\in [N]} k_{\sigma}(\bk - \bk_{j})} d\bv \nonumber \\
    & = \frac{\sum_{j\in [N]} k_{\sigma}(\bk - \bk_{j}) \int \bv \cdot  k_{\sigma}(\bv - \bv_{j}) d\bv}{\sum_{j\in [N]} k_{\sigma}(\bk - \bk_{j})} \nonumber \\
     & = \frac{\sum_{j\in [N]} \bv_{j} k_{\sigma}(\bk - \bk_{j})}{\sum_{j\in [N]} k_{\sigma}(\bk - \bk_{j})}. \label{eq:Gaussian_nonparametric_regression_app}
\end{align}
Now we show how the self-attention mechanism is related to the NW estimator. If the keys $\{\bk_j\}_{j\in [N]}$ are normalized
\begin{align}
    \widehat{f}_{\sigma}(\bq) &= \frac{\sum_{j\in [N]}\bv_{j}\exp\left(-\|\bq - \bk_{j}\|^{2}/2\sigma^2\right)}{ \sum_{j\in [N]}\exp\left(-\|\bq - \bk_{j}\|^{2}/2\sigma^2\right)} \nonumber \\
    &= \frac{\sum_{j\in [N]}\bv_{j}\exp\left[-\left(\|\bq\|^{2} + \|\bk_{j}\|^{2}\right)/2\sigma^{2}\right]\exp\left(\bq^\top\bk_{j}/\sigma^{2}\right)}{\sum_{j\in [N]}\exp\left[-\left(\|\bq\|^{2} + \|\bk_{j}\|^{2}\right)/2\sigma^{2}\right]\exp\left(\bq^\top \bk_{j}/\sigma^{2}\right)} \nonumber \\
    &= \sum_{j\in [N]} \frac{\exp\left(\bq^\top\bk_{j}/\sigma^{2}\right)}{\sum_{j\in [N] \exp\left(\bq^\top\bk_{j}/\sigma^{2}\right)}} \bv_{j} \nonumber \\
    &= \sum_{j\in [N]}{\rm softmax}\left({\bq}^\top{\bk}_j/\sigma^{2}\right){\bv}_j. \label{eqn:attn_reg_final_app}
\end{align} 
Then estimating the softmax attention is equivalent to estimating $\widehat{f}_{\sigma}(\bq)$.

\section{Details on Leveraging Robust KDE on Transformers} \label{sec:proof}
For simplicity, we use the Huber loss function as the demonstrating example, which is defined as follows:
\begin{align}
    \rho(x) := \left\{ \begin{array}{rcl} x^2 / 2, & 0 \leq x \leq a \\
    ax - a^2 / 2, & a < x,\end{array}\right.
    \label{eq:huber}
\end{align}
where $a$ is a constant. The solution to this robust regression problem has the following form: 
\begin{proposition}
\label{theorem:robust_KDE}
Assume the robust loss function $\rho$ is non-decreasing in $[0, \infty]$, $\rho(0) = 0$ and $\lim_{x\to 0} \frac{\rho(x)}{x} = 0$. Define $\psi(x) : = \frac{\rho^\prime(x)}{x}$ and assume $\psi(0) = \lim_{x\to 0} \frac{\rho^\prime(x)}{x}$ exists and finite. Then the optimal $\hat{p}_{\text{robust}}$ can be written as
\begin{align*}
    \hat{p}_{\text{robust}} = \sum_{j\in [N]} \omega_{j} k_{\sigma}(\bx_j, \cdot),
\end{align*}
where $\omega = (\omega_1, \cdots, \omega_N) \in \Delta_N$, with each $\omega_j \propto \psi\left(\|k_{\sigma}(\bx_j, \cdot)-\hat{p}_{\text{robust}}\|_{\mathcal{H}_{k_{\sigma}}}\right)$. Here $\Delta_n$ denotes the $n$-dimensional probability simplex.
\end{proposition}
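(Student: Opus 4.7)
The plan is to derive the form of $\hat{p}_{\text{robust}}$ from the first-order stationarity conditions of the objective in \eqref{eq:robust_rkhs_density_estimator}, viewed as a (generally non-convex) optimization problem over the RKHS $\mathcal{H}_{k_\sigma}$. The regularity conditions imposed on $\rho$ are exactly those needed so that the map $p \mapsto \rho(\|k_\sigma(\bx_j, \cdot) - p\|_{\mathcal{H}_{k_\sigma}})$ is Fréchet differentiable everywhere on $\mathcal{H}_{k_\sigma}$, including at the point where the norm vanishes.

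First, I would compute the Fréchet derivative of each summand. Write $r_j(p) := \|k_\sigma(\bx_j,\cdot) - p\|_{\mathcal{H}_{k_\sigma}}$. Away from $r_j(p) = 0$, the chain rule yields a gradient (with respect to the RKHS inner product)
\begin{equation*}
\nabla_p\, \rho(r_j(p)) \;=\; \rho'(r_j(p))\, \frac{p - k_\sigma(\bx_j,\cdot)}{r_j(p)} \;=\; \psi(r_j(p))\, \bigl(p - k_\sigma(\bx_j,\cdot)\bigr),
\end{equation*}
where $\psi(x) = \rho'(x)/x$. The assumptions $\rho(0)=0$, $\lim_{x\to 0}\rho(x)/x = 0$, and the existence of $\psi(0) = \lim_{x\to 0}\rho'(x)/x$ give, in particular, $\rho'(0) = 0$, so the same formula extends continuously to the case $r_j(p) = 0$ (defining the expression as $\psi(0)(p - k_\sigma(\bx_j,\cdot)) = 0$ there). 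Verifying this extension is the one slightly delicate step, and is where the stated hypotheses on $\rho$ are used.

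Second, I would set the total Fréchet derivative to zero at $p = \hat{p}_{\text{robust}}$:
\begin{equation*}
\sum_{j\in[N]} \psi\bigl(\|k_\sigma(\bx_j,\cdot) - \hat{p}_{\text{robust}}\|_{\mathcal{H}_{k_\sigma}}\bigr)\, \bigl(\hat{p}_{\text{robust}} - k_\sigma(\bx_j,\cdot)\bigr) \;=\; 0.
\end{equation*}
Let $S := \sum_j \psi(\|k_\sigma(\bx_j,\cdot) - \hat{p}_{\text{robust}}\|_{\mathcal{H}_{k_\sigma}})$. Solving for $\hat{p}_{\text{robust}}$ gives
\begin{equation*}
\hat{p}_{\text{robust}} \;=\; \sum_{j\in[N]} \omega_j\, k_\sigma(\bx_j,\cdot), \qquad \omega_j \;=\; \frac{\psi\bigl(\|k_\sigma(\bx_j,\cdot) - \hat{p}_{\text{robust}}\|_{\mathcal{H}_{k_\sigma}}\bigr)}{S},
\end{equation*}
provided $S > 0$. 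By construction $\sum_j \omega_j = 1$, so $\omega \in \Delta_N$; non-negativity follows from the fact that $\rho$ is non-decreasing on $[0,\infty)$, hence $\rho'\ge 0$, hence $\psi \ge 0$. A brief side remark will handle the degenerate case $S = 0$: this forces $\psi(r_j(\hat{p}_{\text{robust}})) = 0$ for every $j$, which, together with the boundary extension discussed above, can be absorbed into the convention $0/0 = 1/N$ (or any simplex point), and the stated form remains valid.

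The main obstacle is really just the boundary case $r_j(p) = 0$: one has to make sure the Fréchet gradient is well-defined and the chain-rule computation is legitimate at points where some residuals vanish, which is exactly what the technical assumptions $\rho(0)=0$, $\lim_{x\to 0}\rho(x)/x = 0$, and the existence of $\psi(0)$ are designed to enforce. Everything else is a direct application of the RKHS inner-product structure to the first-order optimality condition; no representer theorem invocation is needed, as the representation falls out automatically from the stationarity equation.
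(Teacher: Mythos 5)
Your proof follows essentially the same route as the paper's: compute the Gateaux/Fréchet differential of the RKHS objective, impose the first-order stationarity condition at $\hat{p}_{\text{robust}}$, and algebraically rearrange $\sum_j \psi(\|k_\sigma(\bx_j,\cdot)-\hat{p}_{\text{robust}}\|_{\mathcal{H}_{k_\sigma}})(\hat{p}_{\text{robust}} - k_\sigma(\bx_j,\cdot))=0$ into the weighted-sum form; the paper delegates the differentiability lemma to Kim and Scott whereas you carry it out explicitly, including the vanishing-residual boundary case. One small overstatement: your claim that ``no representer theorem invocation is needed'' fails exactly in the degenerate case $S=0$, where stationarity alone no longer pins $\hat{p}_{\text{robust}}$ to the span of the $k_\sigma(\bx_j,\cdot)$ (and the convention $0/0=1/N$ does not recover the true minimizer); for the Huber loss used in the paper this is moot since $\psi>0$ everywhere.
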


\begin{proof} The proof of Proposition~\ref{theorem:robust_KDE} is mainly adapted from the proof in~\cite{Scott_Robust}. Here, we provide proof of completeness. For any $p \in \mathcal{H}_{k_{\sigma}}$, we denote
\begin{align*}
    J(p) = \frac{1}{N} \sum_{j\in [N]} \rho\left(\|k_{\sigma}(\bx_j, \cdot ) - p\|_{\mathcal{H}_{k_{\sigma}}}\right).
\end{align*}
Then we have the following lemma regarding the Gateaux differential of $J$ and a necessary condition for $\hat{p}_{\text{robust}}$ to be optimal solution of the robust loss objective function in equation~(\ref{eq:robust_rkhs_density_estimator}).
\begin{lemma}
\label{lemma:differentiability}
Given the assumptions on the robust loss function $\rho$ in Proposition~\ref{theorem:robust_KDE}, the Gateaux differential of $J$ at $p \in \mathcal{H}_{k_{\sigma}}$ with incremental $h \in \mathcal{H}_{k_{\sigma}}$, defined as $\delta J(p;h)$, is
\begin{align*}
    \delta J(p;h) : = \lim_{\tau \to 0} \frac{J(p + \tau h) - J(p)}{\tau} = - \langle V(p), h \rangle_{\mathcal{H}_{k_{\sigma}}},
\end{align*}
where the function $V: \mathcal{H}_{k_{\sigma}} \to \mathcal{H}_{k_{\sigma}}$ is defined as: $$V(p) = \frac{1}{N} \sum_{j\in [N]} \psi\left(\|k_{\sigma}(\bx_j, \cdot ) - p\|_{\mathcal{H}_{k_{\sigma}}}\right)(k_{\sigma}(\bx_j, \cdot ) - p).$$
A necessary condition for $\hat{p}_{\text{robust}}$ is $V(\hat{p}_{\text{robust}}) = 0$. 
\end{lemma}
The proof of Lemma~\ref{lemma:differentiability} can be found in Lemma 1 of~\cite{Scott_Robust}. Based on the necessary condition for $\hat{p}_{\text{robust}}$ in Lemma~\ref{lemma:differentiability}, i.e., $V(\hat{p}_{\text{robust}}) = 0$, we have
\begin{align*}
    \frac{1}{N} \sum_{j\in [N]} \psi\left(\|k_{\sigma}(\bx_j, \cdot ) - \hat{p}_{\text{robust}} \|_{\mathcal{H}_{k_{\sigma}}}\right)(k_{\sigma}(\bx_j, \cdot ) - \hat{p}_{\text{robust}}) = 0.
\end{align*}
Direct algebra indicates that $\hat{p}_{\text{robust}} = \sum_{j\in [N]} \omega_{j} k_{\sigma}(\bx_j, \cdot)$ where $\omega = (\omega_1, \cdots, \omega_N) \in \Delta_N$, and $\omega_j \propto \psi\left(\|k_{\sigma}(\bx_j, \cdot)-\hat{p}_{\text{robust}}\|_{\mathcal{H}_{k_{\sigma}}}\right)$. As a consequence, we obtain the conclusion of the proposition.
\end{proof}

For the Huber loss function, we have that
\begin{align*}
    \psi(x) := \left\{ \begin{array}{rcl} 1, & 0 \leq x \leq a \\
    a/x, & a < x.\end{array}\right.
\end{align*}
Hence, when the error $\|k_{\sigma}(\bx_j, \cdot), \cdot - \hat{p}_{\text{robust}}\|_{\mathcal{H}_{k_{\sigma}}}$ is over the threshold $a$, the final estimator will down-weight the importance of $k_{\sigma}(\bx_j, \cdot)$. This is in sharp contrast with the standard KDE method, which will assign uniform weights to all of the $k_{\sigma}(\bx_j, \cdot)$. As we mentioned in the main paper, the estimator provided in Proposition~\ref{theorem:robust_KDE} is circularly defined, as $\hat{p}_{\text{robust}}$ is defined via $\omega$, and $\omega$ depends on $\hat{p}_{\text{robust}}$. Such an issue can be addressed by estimating $\omega$ with an iterative algorithm termed as kernelized iteratively re-weighted least-squares (KIRWLS). The algorithm starts with randomly initialized $\omega^{(0)} \in \Delta_n$, and perform the following iterative updates between two steps:
\begin{equation}
    \hat{p}_{\text{robust}}^{(k)} = \sum_{j\in [N]} \omega_i^{(k-1)} k_{\sigma}(\bx_j, \cdot), \quad
    \omega_j^{(k)} =\frac{\psi\left(\left\|k_{\sigma}(\bx_j, \cdot) - \hat{p}_{\text{robust}}^{(k)}\right\|_{\mathcal{H}_{k_{\sigma}}}\right)}{\sum_{j\in [N]}\psi\left(\left\|k_{\sigma}(\bx_j, \cdot) - \hat{p}_{\text{robust}}^{(k)}\right\|_{\mathcal{H}_{k_{\sigma}}}\right)}.
    \label{eq:kirwls}
\end{equation}
Note that, the optimal $\hat{p}_{\text{robust}}$ is the fixed point of this iterative update, and the KIRWLS algorithm converges under standard regularity conditions. Furthermore, one can directly compute the term $\left\|k_{\sigma}(\bx_j, \cdot) - \hat{p}_{\text{robust}}^{(k)}\right\|_{\mathcal{H}_{k_{\sigma}}}$ via the reproducing property:
\begin{align}
    \left\|k_{\sigma}(\bx_j, \cdot) - \hat{p}_{\text{robust}}^{(k)}\right\|_{\mathcal{H}_{k_{\sigma}}}^2 
    &= -2\sum_{m\in [N]}\omega_m^{(k-1)} k_{\sigma}(\bx_m, \bx_j) +k_{\sigma}(\bx_j, \bx_j) \nonumber \\ 
    &+ \sum_{m\in [N], n\in [N]} \omega_m^{(k-1)} \omega_n^{(k-1)} k_{\sigma}(\bx_m, \bx_n).
\end{align}
Therefore, the weights can be updated without mapping the data to the Hilbert space.


\section{Fourier Attention with Median of Means}
\label{sec:mom_fourierformer}
We introduce the Fourier Attention coupled with the Median of Means (MoM) principle and show how this is robust to outliers. For any given function $\phi: \R \rightarrow \R$ and radius $R$, we randomly divide the keys $\{\bk_i \}_{i \in [N]}$ into $B$ subsets $I_1, \dots, I_B$ of equal size where $|I_1| = |I_2| = \dots = |I_B| = \mathcal{S}$. Define $\hat{p}_{R, I_m}(\bq_l) = \frac{1}{\mathcal{S}} \sum_{i \in I_m} \prod_{j=1}^D \phi(\frac{\sin (R(q_{lj} - k_{ij}))}{R(q_{lj} - k_{ij})})$, then the MoM Fourier attention is defined as
\begin{equation}
    \hat{\mathbf{h}}_l = \frac{\frac{1}{\mathcal{S}}\sum_{i\in I_m} \mathbf{v}_i \prod_{j=1}^D \phi(\frac{\sin (R(q_{lj} - k_{ij}))}{R(q_{ij} - k_{lj})})}{\mathrm{median}\{\hat{p}_{R, I_1}(\bq_l), \dots, \hat{p}_{R, I_B}(\bq_l)\}},
    \label{eq:fourier_mom}
\end{equation}
where $I_m$ is the block such that $\hat{p}_{R}(\bq_l, \bk)$ achieves its median value. To shed light into the robustness of  Transformers that use Eq. (\ref{eq:fourier_mom}) as the attention mechanism, we demonstrate that the estimator $\hat{p}_{R}(\bq) = \mathrm{median}\{\hat{p}_{R, I_1}(\bq), \dots, \hat{p}_{R, I_B}(\bq)\}$ is a robust estimator of the density function $p(\bq)$ of the keys. We first introduce a few notations that are useful for stating this result. Denote $\mathcal{C} = \{1 \leq i \leq N: \ k_{i} \ \text{is clean}\}$ and $\mathcal{O} = \{1 \leq i \leq N: \ k_{i} \ \text{is outlier}\}$. Then, we have $\mathcal{C} \cap \mathcal{O} = \emptyset$ and $\mathcal{C} \cup \mathcal{O} = \{1, 2, \ldots, N\}$. The following result establishes a high probability upper bound on the sup-norm between $\widehat{p}_{R}(\bq)$ and $p(\bq)$.
\begin{theorem}
\label{theorem:sup_norm}
Assume that the function $\phi$ satisfies $\int \phi(\sin(z)/z) z^{j} dz = 0$ for all $1 \leq j \leq m$ and $\int |\phi(\sin(z)/z)| |z|^{m + 1} dz < \infty$ for some $m \in \mathbb{N}$. Furthermore, the density function $p(\bq)$ satisfies $\sup_{\bq} |p(\bq)| < \infty$. The number of blocks $B$ and the number of outliers $|\mathcal{O}|$ are such that $B > (2 + \delta) |\mathcal{O}|$ where $\delta$ is the failure probability. Then, with $\Delta = \frac{1}{2 + \delta} - \frac{|\mathcal{O}|}{B}$ for the radius $R$ sufficiently large and $\delta$ sufficiently small, with probability at least $1 - \exp(-2 \Delta^2 B)$ we find that
\begin{align*}
    \|\hat{p}_{R} - p\|_{\infty} \leq C(\frac{1}{R^{m+1}} + \sqrt{\frac{B R^{D} \log R \log (2/\delta)}{N}})
\end{align*}
where $C$ is some universal constant.
\end{theorem}
\begin{remark} The result of Theorem~\ref{theorem:sup_norm} indicates by choosing $R = \mathcal{O}(N^{-\frac{1}{2(m+1) + D}})$, the rate of $\hat{p}_{R}$ to $p$ under the supremum norm is $\mathcal{O}(N^{-\frac{m + 1}{2(m+1) + D}})$. With that choice of $R$, when $N$ approaches infinity, the MoM estimator $\hat{p}_{R}$ is a consistent estimator of the clean distribution $p$ of the keys. This confirms the validity of using $\hat{p}_{R}$ to robustify $p$ and similarly the usage of MoM Fourier attention~Eq.~(\ref{eq:fourier_mom}) as a robust attention for Transformers. 
\end{remark}
From the formulation of the MoM estimator $\widehat{p}_{R}(\bq)$, we obtain the following inequality
\begin{align*}
    \{\sup_{\bq} |\hat{p}_{R}(\bq) - p(\bq)| \geq \epsilon \} \subset \{\sup_{\bq} \sum_{b = 1}^{B} \bold{1}_{\{|\hat{p}_{R, I_{b}}(\bq) - p(\bq)| \geq \epsilon\}} \geq \frac{B}{2} \}
\end{align*}
This bound indicates that to bound $\mathbb{P}(\|\hat{p}_{R}(\bq) - p(\bq)\|_{\infty} \geq \epsilon)$, it is sufficient to bound $\mathbb{P}(\{\sup_{\bq} \sum_{b = 1}^{B} \bold{1}_{\{|\hat{p}_{R, I_{b}}(\bq) - p(\bq)| \geq \epsilon\}} \geq \frac{B}{2} \})$. Indeed, for each $1 \leq b \leq B$, we find that
\begin{align*}
    \bold{1}_{\{|\hat{p}_{R, I_{b}}(\bq) - p(\bq)| \geq \epsilon\}} \leq \bold{1}_{\{\sup_{\bq} \{|\hat{p}_{R, I_{b}}(\bq) - p(\bq)| \geq \epsilon\}}.
\end{align*}
Therefore, we have
\begin{align*}
    \sum_{b = 1}^{B} \bold{1}_{\{|\hat{p}_{R, I_{b}}(\bq) - p(\bq)| \geq \epsilon\}} \leq \sum_{b = 1}^{B} \bold{1}_{\{\sup_{\bq} \{|\hat{p}_{R, I_{b}}(\bq) - p(\bq)| \geq \epsilon\}}, 
\end{align*}
which leads to $\sup_{\bq} \sum_{b = 1}^{B} \bold{1}_{\{|\hat{p}_{R, I_{b}}(\bq) - p(\bq)| \geq \epsilon\}} \leq \sum_{b = 1}^{B} \bold{1}_{\{\sup_{\bq} \{|\hat{p}_{R, I_{b}}(\bq) - p(\bq)| \geq \epsilon\}}$. This inequality shows that
\begin{align*}
    \mathbb{P}(\{\sup_{\bq} \sum_{b = 1}^{B} \bold{1}_{\{|\hat{p}_{R, I_{b}}(\bq) - p(\bq)| \geq \epsilon\}} \geq \frac{B}{2} \}) \leq \mathbb{P}(\sum_{b = 1}^{B} \bold{1}_{\{\sup_{\bq} \{|\hat{p}_{R, I_{b}}(\bq) - p(\bq)| \geq \epsilon\}}). 
\end{align*}
To ease the presentation, we denote $W_{b} = \bold{1}_{\{\sup_{\bq} \{|\hat{p}_{R, I_{b}}(\bq) - p(\bq)| \geq \epsilon\}}$ and $\mathcal{B} = \{1 \leq b \leq B: \ I_{b} \cap \mathcal{O} = \emptyset\}$. Then, the following inequalities hold
\begin{align*}
    \sum_{b = 1}^{B} \bold{1}_{\{\sup_{\bq} \{|\hat{p}_{R, I_{b}}(\bq) - p(\bq)| \geq \epsilon\}} & = \sum_{b \in \mathcal{B}} W_{b} + 
    \sum_{b \in \mathcal{B}^{c}} W_{b} \\
    & \leq \sum_{b \in \mathcal{B}} W_{b} + |\mathcal{O}| \\
    & \leq \sum_{b \in \mathcal{B}} (W_{b} - \mathbb{E}[W_{b}]) + B \cdot \mathbb{P}(\sup_{\bq} |\hat{p}_{R, I_{1}}(\bq) - p(\bq)| > \epsilon) + |\mathcal{O}|, 
\end{align*}
where we assume without loss of generality that $1 \in \mathcal{B}$, which is possible due to the assumption that $B > (2 + \delta) |\mathcal{O}|$. We now prove the following uniform concentration bound:
\begin{lemma}
\label{lemma:uniform_concentration}
Assume that $\phi(z) \leq C$ for all $|z| \leq 1$ for some universal constant $C$. We have
\begin{align*}
    \mathbb{P}(\sup_{\bq} |\hat{p}_{R, I_{1}}(\bq) - p(\bq)| \geq C(\frac{1}{R^{m+1}} + \sqrt{\frac{R^{D} \log R \log (2/\delta)}{|I_{1}|}})) \leq \delta. 
\end{align*}
\end{lemma}
\begin{proof}[Proof of Lemma~\ref{lemma:uniform_concentration}] By the triangle inequality, we have
\begin{align*}
    \sup_{\bq} |\hat{p}_{R, I_{1}}(\bq) - p(\bq)| \leq \sup_{\bq} |\hat{p}_{R, I_{1}}(\bq) - \mathbb{E}[\hat{p}_{R, I_{1}}(\bq)]| + \sup_{\bq} |\mathbb{E}[\hat{p}_{R, I_{1}}(\bq)] - p(\bq)|.
\end{align*}
To bound $\sup_{\bq} |\hat{p}_{R, I_{1}}(\bq) - \mathbb{E}[\hat{p}_{R, I_{1}}(\bq)]|$, we use Bernstein's inequality along with the bracketing entropy under $\mathbb{L}_{1}$ norm in the space of queries $\bq$. In particular, we denote $Y_{i} = \frac{R^{D}}{A^{D}} \prod_{j=1}^D \phi(\frac{\sin (R(q_{j} - k_{ij}))}{R(q_{j} - k_{ij})})$ where $A = \int_{\mathbb{R}} \phi(\frac{\sin(z)}{z})dz$ for all $i \in I_{1}$. Since $\sin(R(q_{j} - k_{ij}))/(R(q_{j} - k_{ij})) \leq 1$ for all $1 \leq j \leq D$, we obtain that $|Y_{i}| \leq C^{D} R^{D}/ A^{D}$ where $C$ is the constant such that $\phi(z) \leq C$ when $|z| \leq 1$. Furthermore, $E[|Y_{i}|]$

By choose $\epsilon = C(\frac{1}{R^{m+1}} + \sqrt{\frac{R^{D} \log R \log (2/\delta)}{|I_{1}|}}))$, then we find that
\begin{align*}
    \mathbb{P}(\sup_{\bq} |\hat{p}_{R, I_{1}}(\bq) - p(\bq)| > \epsilon) \leq \frac{\delta}{2(2 + \delta)}
\end{align*}
Collecting the above inequalities leads to
\begin{align*}
    \mathbb{P}(\{\sup_{\bq} \sum_{b = 1}^{B} \bold{1}_{\{|\hat{p}_{R, I_{b}}(\bq) - p(\bq)| \geq \epsilon\}} \geq \frac{B}{2} \}) \leq \exp(- 2B\Delta^2),
\end{align*}
where $\Delta = \frac{1}{2 + \delta} - \frac{|\mathcal{O}|}{B}$. As a consequence, we obtain the conclusion of the theorem.
\end{proof}

\section{Dataset Information}
\paragraph{WikiText-103}
The dataset\footnote[1]{www.salesforce.com/products/einstein/ai-research/the-wikitext-dependency-language-modeling-dataset/}  contains around $268K$ words and its training set consists of about $28K$ articles with $103M$ tokens, this corresponds to text blocks of about 3600 words. The validation set and test sets consist of 60 articles with $218K$ and $246K$ tokens respectively. 

\paragraph{ImageNet} 
We use the full ImageNet dataset that contains $1.28M$ training images and $50K$ validation images. 
The model learns to predict the class of the input image among 1000 categories. We report the top-1 and top-5 accuracy on all experiments. The following ImageNet variants are test sets that are used to evaluate model performance.

\paragraph{ImageNet-C}
For robustness on common image corruptions, we use ImageNet-C \citep{hendrycks2019benchmarking} which consists of 15 types of algorithmically generated corruptions with five levels of severity. ImageNet-C uses the mean corruption error (mCE) as a metric: the smaller mCE means the more robust the model under corruption. 

\paragraph{ImageNet-A}
This dataset contains real-world adversarially filtered images that fool current ImageNet classifiers. A 200-class subset of the original ImageNet-1K's 1000 classes is selected so that errors among these 200 classes would be considered egregious, which cover most broad categories spanned by ImageNet-1K.

\paragraph{ImageNet-O}
This dataset contains adversarially filtered examples for ImageNet out-of-distribution detectors. The dataset contains samples from ImageNet-22K but not from ImageNet-1K, where samples that are wrongly classified as an ImageNet-1K class with high confidence by a ResNet-50 are selected. We use AUPR (area under precision-recall) as the evaluation metric.

\paragraph{ImageNet-R}
This dataset contains various artistic renditions of object classes from the original ImageNet dataset, which is discouraged by the original ImageNet. ImageNet-R contains 30,000 image renditions for 200 ImageNet classes, where a subset of the ImageNet-1K classes is chosen. 

\paragraph{ImageNet-Sketch}
This dataset contains 50,000 images, 50 images for each of the 1000 ImageNet classes. The dataset is constructed with Google Image queries ``sketch of xxx'', where xxx is the standard class name. The search is only performed within the ``black and white'' color scheme.


\section{Ablation Studies} \label{sec:ablation}
In this section, we provide additional results and ablation studies that focus on different design choices for the proposed robust KDE attention mechanisms. The detailed experimental settings can be found in the caption of each table.

\begin{table*}[ht]
    \centering
    \renewcommand\arraystretch{0.9}
     \caption{ \footnotesize{Perplexity (PPL) and negative likelihood loss (NLL) of our methods (lower part) and baselines (upper part) on WikiText-103 using a \textcolor{red}{medium version} of Transformer. The best results are highlighted in bold font and the second best are highlighted in underline. On clean data, Transformer-SPKDE achieves better PPL and NLL than other baselines. Under random swap with outlier words, Transformers with MoM self-attention show much better performance.}}
     \vskip 0.1in
    \scalebox{0.84}{
    \begin{tabular}{c|c|c|c|c}
    \hlinewd{1.5pt}
    \multirow{2}{*}{ Method (median version) } & \multicolumn{2}{c}{Clean Data} & \multicolumn{2}{c}{Word Swap} \\ \hhline{~|-|-|-|-}
    & Valid PPL/Loss & Test PPL/Loss & Valid PPL/Loss & Test PPL/Loss \\ \hline
    Transformer \citep{NIPS2017_3f5ee243} & 27.90/3.32 & 29.60/3.37 & 65.36/4.31 & 68.12/4.36 \\
    Performer \citep{choromanski2021rethinking} & 27.34/3.31 & 29.51/3.36 & 64.72/4.30 & 67.43/4.34\\
    Transformer-MGK \citep{nguyen_mixture_keys} & 27.28/3.31 & 29.24/3.36 & 64.46/4.30 & 67.31/4.33 \\ 
    FourierFormer \citep{nguyen2022transformer} & 26.51/3.29 & 28.01/3.33 & 63.74/4.28 & 65.27/4.31 \\ \hline
    Transformer-RKDE (Huber) & 26.12/3.28 & 27.89/3.32 & 49.37/3.85 & 51.22/3.89 \\
    Transformer-RKDE (Hampel) & \underline{25.87/3.27} & \underline{27.44/3.31} & 48.62/3.83 & 51.03/3.88 \\
    Transformer-SPKDE & \textbf{25.76/3.27} & \textbf{27.35/3.31} & 46.91/3.79 & 49.14/3.84 \\
    Transformer-MoM & 28.26/3.34 & 29.98/3.38 & \underline{45.35/3.75} & \underline{47.92/3.81} \\
    FourierFormer-MoM & 27.13/3.31 & 29.02/3.36 & \textbf{43.23/3.71} & \textbf{44.97/3.74} \\
    \hlinewd{1.5pt}
    \end{tabular}
    }
    \vskip 0.1in
    \label{tab:wikitext-103-med}
\end{table*}

\begin{table}[ht]
\centering
\renewcommand\arraystretch{1.5}
\caption{Test PPL/NLL loss versus the parameter $a$ of Huber loss function defined in Eq. (\ref{eq:huber}) (upper) and Hampel loss function \citep{Scott_Robust} (lower; we use $2\times a$ and $3\times a$ as parameters $b$ and $c$) on original and word-swapped Wiki-103 dataset. The best results are highlighted in bold font and the second best are highlighted in underline. We choose $a=0.4$ in rest of the experiments.}
\vskip 0.1in
\scalebox{0.9}{
\begin{tabular}{c|c|c|c|c|c|c} \hlinewd{1.5pt}
Robust Loss Parameter & 0.1 & 0.2 & 0.4 & 0.6 & 0.8 & 1 \\ \hline
Clean Data & 32.92/3.48 & 32.87/3.48 & \textbf{32.29/3.47} & \underline{32.38/3.48} & 32.46/3.48 & 32.48/3.48 \\
Word Swap & \underline{55.82/3.99} & 55.97/3.99 & \textbf{55.68/3.99} & 56.89/4.01 & 57.26/4.01 & 57.37/4.01 \\\hlinewd{1.5pt}
Clean Data & 32.67/3.48 & \textbf{32.32/3.48} & \underline{32.35/3.48} & 32.47/3.48 & 32.53/3.48 & 32.58/3.48 \\
Word Swap & 58.02/4.03 & \textbf{57.86/4.03} & \underline{57.92/4.03} & 58.24/4.04 & 58.37/4.04 & 58.43/4.04 \\ \hlinewd{1.5pt}
\end{tabular}}
\label{tab:text_ablation}
\end{table}

\begin{table}[ht]
\centering
\renewcommand\arraystretch{1.5}
\caption{Top-1 classification accuracy on ImageNet versus the parameter $a$ of Huber loss function defined in Eq. (\ref{eq:huber}) under different settings. The best results are highlighted in bold font and the second best are highlighted in underline. We choose $a=0.2$ in rest of the experiments.}
\vskip 0.1in
\scalebox{1}{
\begin{tabular}{c|c|c|c|c|c|c} \hlinewd{1.5pt}
Huber Loss Parameter & 0.1 & 0.2 & 0.4 & 0.6 & 0.8 & 1 \\ \hline
Clean Data & 71.45 & \textbf{72.83} & \underline{71.62} & 71.07 & 70.65 & 70.34 \\
FGSM & \textbf{56.72} & \underline{55.83} & 55.34 & 54.87 & 54.02 & 52.98 \\
PGD & \textbf{46.37} & \underline{44.15} & 43.87 & 43.25 & 42.69 & 41.96 \\
SPSA & \underline{52.38} & \textbf{52.42} & 51.69 & 51.34 & 50.97 & 48.22 \\
Imagenet-C & 45.37 & \underline{45.58} & \textbf{45.63} & 45.26 & 44.63 & 43.76 \\ \hlinewd{1.5pt}
\end{tabular}}
\label{tab:huber_a}
\end{table}

\begin{table}[ht]
\centering
\renewcommand\arraystretch{1.5}
\caption{Top-1 classification accuracy on ImageNet versus the parameter $a$ of Hampel loss function defined in \cite{Scott_Robust} under different settings. We use $2\times a$ and $3\times a$ as parameters $b$ and $c$. The best results are highlighted in bold font and the second best are highlighted in underline. We choose $a=0.2$ in rest of the experiments.}
\vskip 0.1in
\scalebox{1}{
\begin{tabular}{c|c|c|c|c|c|c} \hlinewd{1.5pt}
Hampel Loss Parameter & 0.1 & 0.2 & 0.4 & 0.6 & 0.8 & 1 \\ \hline
Clean Data & 71.63 & \textbf{72.94} & \underline{71.84} & 71.23 & 70.87 & 70.41 \\
FGSM & \textbf{56.42} & \underline{55.92} & 55.83 & 55.66 & 54.97 & 53.68 \\
PGD & \textbf{45.18} & \underline{44.23} & 43.89 & 43.62 & 43.01 & 42.34 \\
SPSA & \textbf{52.96} & \underline{52.48} & 52.13 & 51.46 & 50.92 & 50.23 \\
Imagenet-C & 44.76 & 45.61 & \underline{46.04} & \textbf{46.13} & 45.82 & 45.31 \\ \hlinewd{1.5pt}
\end{tabular}}
\label{tab:hampel_a}
\end{table}

\begin{table}[ht]
\centering
\renewcommand\arraystretch{1.5}
\caption{Top-1 classification accuracy on ImageNet versus the parameter $\beta$ of SPKDE defined in Eq. (\ref{eq:spkde}) under different settings. $\beta=\frac{1}{1-\varepsilon}>1$, where $\varepsilon$ is the percentage of anomalous samples. A larger $\beta$ indicates a more robust model. The best results are highlighted in bold font and the second best are highlighted in underline. We choose $\beta=1.4$ in rest of the experiments.}
\vskip 0.1in
\scalebox{1}{
\begin{tabular}{c|c|c|c|c|c|c} \hlinewd{1.5pt}
$\beta$ & 1.05 & 1.2 & 1.4 & 1.6 & 1.8 & 2 \\ \hline
Clean Data & \textbf{74.25} & \underline{73.56} & 73.22 & 73.01 & 72.86 & 72.64 \\
FGSM & 53.69 & 55.08 & \textbf{56.03} & \underline{55.37} & 54.21 & 53.86 \\
PGD & 42.31 & 43.68 & \textbf{44.51} & \underline{44.32} & 44.17 & 43.71 \\
SPSA & 51.29 & 52.02 & \underline{52.64} & \textbf{52.84} & 52.16 & 51.39 \\
Imagenet-C & 44.68 & \textbf{45.49} & \underline{44.76} & 44.21 & 43.96 & 43.33 \\ \hlinewd{1.5pt}
\end{tabular}}
\label{tab:spkde}
\end{table}

\begin{table}[ht]
\centering
\renewcommand\arraystretch{1.5}
\caption{Top-1 classification accuracy on ImageNet versus the number of iterations of the KIRWLS algorithm in Eq. (\ref{eq:kirwls}) employed in Transformer-RKDE. Since the increased number of iterations does not lead to significant improvements of performance while the computational cost is much higher, we use the single-step iteration of the KIRWLS algorithm in Transformer-RKDE.}
\vskip 0.1in
\scalebox{1}{
\begin{tabular}{c|c|c|c|c|c|c|c|c} \hlinewd{1.5pt}
& \multicolumn{4}{c}{Huber Loss} & \multicolumn{4}{c}{Hampel Loss} \\ \hhline{~|-|-|-|-|-|-|-|-}
Iteration \# & 1 & 2 & 3 & 5 & 1 & 2 & 3 & 5 \\ \hline
Clean Data & 72.83 & 72.91 & 72.95 & 72.98 & 72.94 & 72.99 & 73.01 & 73.02 \\
FGSM & 55.83 & 55.89 & 55.92 & 55.94 & 55.92 & 55.96 & 55.97 & 55.99 \\
PGD & 44.15 & 44.17 & 44.17 & 44.18 & 44.23 & 44.26 & 44.28 & 44.31 \\
SPSA & 52.42 & 52.44 & 52.45 & 52.45 & 52.48 & 52.53 & 52.55 & 52.56 \\
Imagenet-C & 45.58 & 45.61 & 45.62 & 45.62 & 45.61 & 45.66 & 45.68 & 45.71 \\ \hlinewd{1.5pt}
\end{tabular}}
\label{tab:iteration}
\end{table}

\begin{table}[ht]
\centering
\renewcommand\arraystretch{1.5}
\caption{Computation time (measured by seconds per iteration) of baseline methods, Transformer-SPKDE, Transformer-MoM and Transformer-RKDE with different number of KIRWLS iterations. Transformer-SPKDE requires longer time since it directly obtains the optimal set of weights via the QP solver.}
\vskip 0.1in
\scalebox{1}{
\begin{tabular}{c|c|c|c|c|c|c|c|c} \hlinewd{1.5pt}
\multirow{2}{*}{ } & \multicolumn{4}{c}{Iterations of KIRWLS} & \multirow{2}{*}{ DeiT } & \multirow{2}{*}{ RVT } & \multirow{2}{*}{ SPKDE } & \multirow{2}{*}{ MoM-KDE }\\ \hhline{~|-|-|-|-|~|~|~|~}
& 1 & 2 & 3 & 5 & & & & \\ \hline
Time (s/it) & 0.43 & 0.51 & 0.68 & 0.84 & 0.35 & 0.41 & 1.45 & 0.37 \\ \hlinewd{1.5pt}
\end{tabular}
}
\label{tab:time_iter}
\end{table}

\begin{figure*}[ht]
    \begin{minipage}{\textwidth}
    \centering
    \begin{tabular}{@{\hspace{-2.9ex}} c @{\hspace{-2.5ex}} @{\hspace{-2.4ex}} c @{\hspace{-2.5ex}} @{\hspace{-2.4ex}} c @{\hspace{-4.5ex}} c @{\hspace{-1.5ex}}}
        \begin{tabular}{c}
        \includegraphics[width=.27\textwidth]{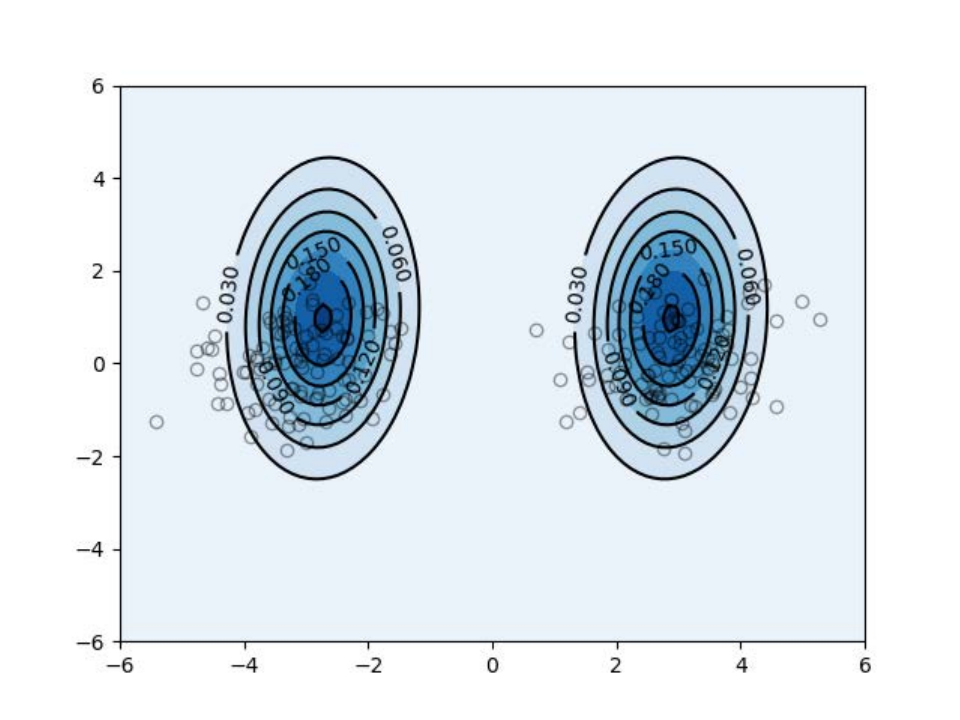}
        \\
        {\small{(a)}}
        \end{tabular} &
        \begin{tabular}{c}
        \includegraphics[width=.27\textwidth]{fig/kde.pdf}
        \\
        {\small{(b)}}
        \end{tabular} & 
        \begin{tabular}{c}
        \includegraphics[width=.27\textwidth]{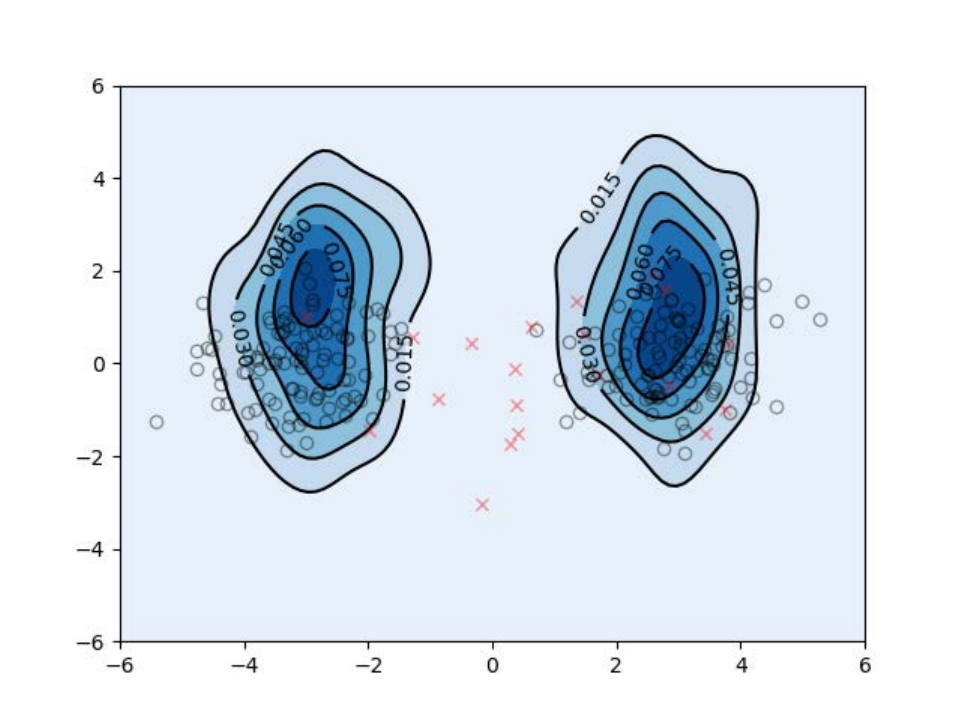}
        \\
        {\small{(c)}}
        \end{tabular} & 
        \begin{tabular}{c}
        \includegraphics[width=.27\textwidth]{fig/momkde.pdf} 
        \\
        {\small{(d)}}
        \end{tabular} \\
        \end{tabular}
    \end{minipage}    
    \caption{Contour plots of density estimation of the 2-dimensional query vector embedding in an attention layer of the transformer when using (b) KDE (Eq.~(\ref{eq:rkhs_density_estimator})) and (c) RKDE after one iteration of Eq. (\ref{eq:kirwls}) with Huber loss (Eq.~(\ref{eq:huber})), (d) KDE with median-of-means principle (Eq. (\ref{eq:kde_median})), where (a) is the true density function. We draw 1000 samples (gray circles) from a multivariate normal density and 100 outliers (red cross) from a gamma distribution as the contaminating density. RKDE and KDE with the median-of-means principle can be less affected by contaminated samples when computing self-attention as nonparametric regression.}
    \label{fig:density_estimation_app}
\end{figure*}
\end{document}